\newtheorem{theorem}{Theorem}[section]
\newtheorem{claim}[theorem]{Claim}
\newenvironment{proof}{\noindent{\bf Proof.}}{\hfill$\square$\medskip}
\newenvironment{proof*}[1]{\noindent{\bf Proof of #1.}}{\hfill$\square$\medskip}
\begin{document}

\twocolumn[
\mlsystitle{TT-Rec: Tensor Train Compression \\ for Deep Learning Recommendation Model Embeddings}


\mlsyssetsymbol{equal}{*}

\begin{mlsysauthorlist}
\mlsysauthor{Chunxing Yin}{goo,to}
\mlsysauthor{Bilge Acun}{to}
\mlsysauthor{Xing Liu}{to}
\mlsysauthor{Carole-Jean Wu}{to}

\mlsysauthor{$^{1}$Georgia Institute of Technology, $^{2}$Facebook AI}{}
\end{mlsysauthorlist}

\mlsysaffiliation{to}{Georgia Institute of Technology, USA}
\mlsysaffiliation{goo}{Facebook AI, USA}
\mlsyscorrespondingauthor{Bilge Acun}{acun@fb.com}

\mlsyskeywords{Machine Learning, MLSys, Tensor Train Compression, Embedding}

\vskip 0.3in


\begin{abstract}
The memory capacity of embedding tables in deep learning recommendation models (DLRMs) is increasing dramatically from tens of GBs to TBs across the industry.
Given the fast growth in DLRMs, novel solutions are urgently needed, in order to enable fast and efficient DLRM innovations. At the same time, this must be done without having to exponentially increase infrastructure capacity demands.
In this paper, we demonstrate the promising potential of Tensor Train decomposition for DLRMs (TT-Rec), an important yet under-investigated context.
We design and implement optimized kernels (TT-EmbeddingBag) to evaluate the proposed TT-Rec design. 
TT-EmbeddingBag is 3$\times$ faster than the SOTA TT implementation.
The performance of TT-Rec is further optimized with the batched matrix multiplication and caching strategies for embedding vector lookup operations. In addition, we present mathematically and empirically the effect of weight initialization distribution on DLRM accuracy and propose to initialize the tensor cores of TT-Rec following the sampled Gaussian distribution.
We evaluate TT-Rec across three important design space dimensions---memory capacity, accuracy, and timing performance---by training MLPerf-DLRM with Criteo's Kaggle and Terabyte data sets.
TT-Rec achieves 117$\times$ and 112$\times$ model size compression, for Kaggle and Terabyte, respectively. This impressive model size reduction can come with no accuracy nor training time overhead as compared to the uncompressed baseline.

Our code is available on Github at \href{https://github.com/facebookresearch/FBTT-Embedding}{facebookresearch/FBTT-Embedding}.
\end{abstract}
]


\section{Introduction}
Deep neural networks (DNNs) are witnessing an unprecedented growth in all dimensions: data, model complexity, and the cost of infrastructure required for model training and deployment.
For instance, at Facebook, the amount of data used in machine learning
tripled in one year (2019--20), which led to an eight-fold increase in the amount of computation required for training~\cite{hazelwood:sparkAI2020}.
Similarly, the number of parameters in state-of-the-art language models have increased exponentially, currently at over 175 billion parameters in OpenAI’s GPT-3~\cite{brown2020language}.
In response, there is considerable interest to design domain-specific accelerators and at-scale infrastructures~\cite{jouppi:ISCA2017,ovtcharov2015toward,chung2018serving,fowers2018a,nvidia:DGXa100,mlperf-inference,mlperf-training,zion,hazelwood:hpca2018,Alibaba_Hanguang,Amazon_inferentia}.
But to achieve significant reductions in the cost and capacity, we still need to discover orders-of-magnitude reductions in the infrastructure demand while maintaining or even outperforming SOTA model accuracy.

In this work, we consider a new algorithmic approach to cope with the large memory requirement of DNNs, focusing on the critical use-case of embedding tables in deep learning-based recommendation models (DLRMs).
These models represent one of the most resource-demanding deep learning workloads, consuming more than 50\% of training and 80\% of the total AI inference cycles at Facebook’s data centers~\cite{Gupta2019,naumov:arxiv2020}. From systems perspective, the large embedding tables that contribute to more than 99\% of the total recommendation model capacity are the Amdahl’s bottleneck for optimization. 
Our approach uses \textit{tensorization} to address the large memory capacity demand of embedding tables in a DLRM. 
At a high level, tensorization replaces layers of a neural network with an approximate and structured low-rank form~\cite{Novikov2015}. This form is parametric -- its ``shape'' determines the design trade-off between storage capacity, execution time, and model accuracy. Furthermore, tensorized representation can be fine-tuned with respect to the architecture of a given hardware platform. 
Figure~\ref{fig:motivation} illustrates the design space with respective to the tunable parameters, such as rank of the tensorization method, dimensions of the embedding and number of tables to compress. Data points on the Pareto frontier (black curve) represent the optimal settings that maximize the recommendation model accuracy (y-axis) given a corresponding memory size (x-axis). The parameters of these optimal data points vary, depending on the model's characteristics (embedding dimensions), the tensorization setting (i.e., tensor ranks), and the memory capacity of the underlying training system.
Given the large configuration space, parameters need to be
carefully studied in order to achieve highest possible model
quality given a target model size.


We design a \textbf{T}ensor-\textbf{T}rain compression technique for deep learning \textbf{Rec}ommendation models, called \textit{TT-Rec}.
The core idea is to replace large embedding tables in a DLRM with a sequence of matrix products. This method is analogous to techniques that use lookup tables to trade-off memory storage and bandwidth with computation.
TT-Rec suits well for accelerators like GPUs, which have a relatively higher compute-to-memory (FLOPs-per-Byte) ratio and limited memory capacity. 
Since the tensor representation is a ``lossy'' compression scheme, to compensate for accuracy loss
we propose a new way to initialize the element distribution of the tensor form.
Furthermore, to mitigate increases in training time when the tensor form must be decompressed, we introduce a cache structure that exploits the unique sparse feature distribution in DLRMs, in which we store the most accessed embedding vectors in the uncompressed format.
Since these cached embedding vectors are learned without compression, using this cache design can also help recovering model accuracy.
Thus, TT-Rec uses a hybrid approach to learn features
and deliver on-par model accuracy while requiring orders-of-magnitude less memory capacity.

\begin{figure}[t]
    \centering
    \includegraphics[width=.8\columnwidth]{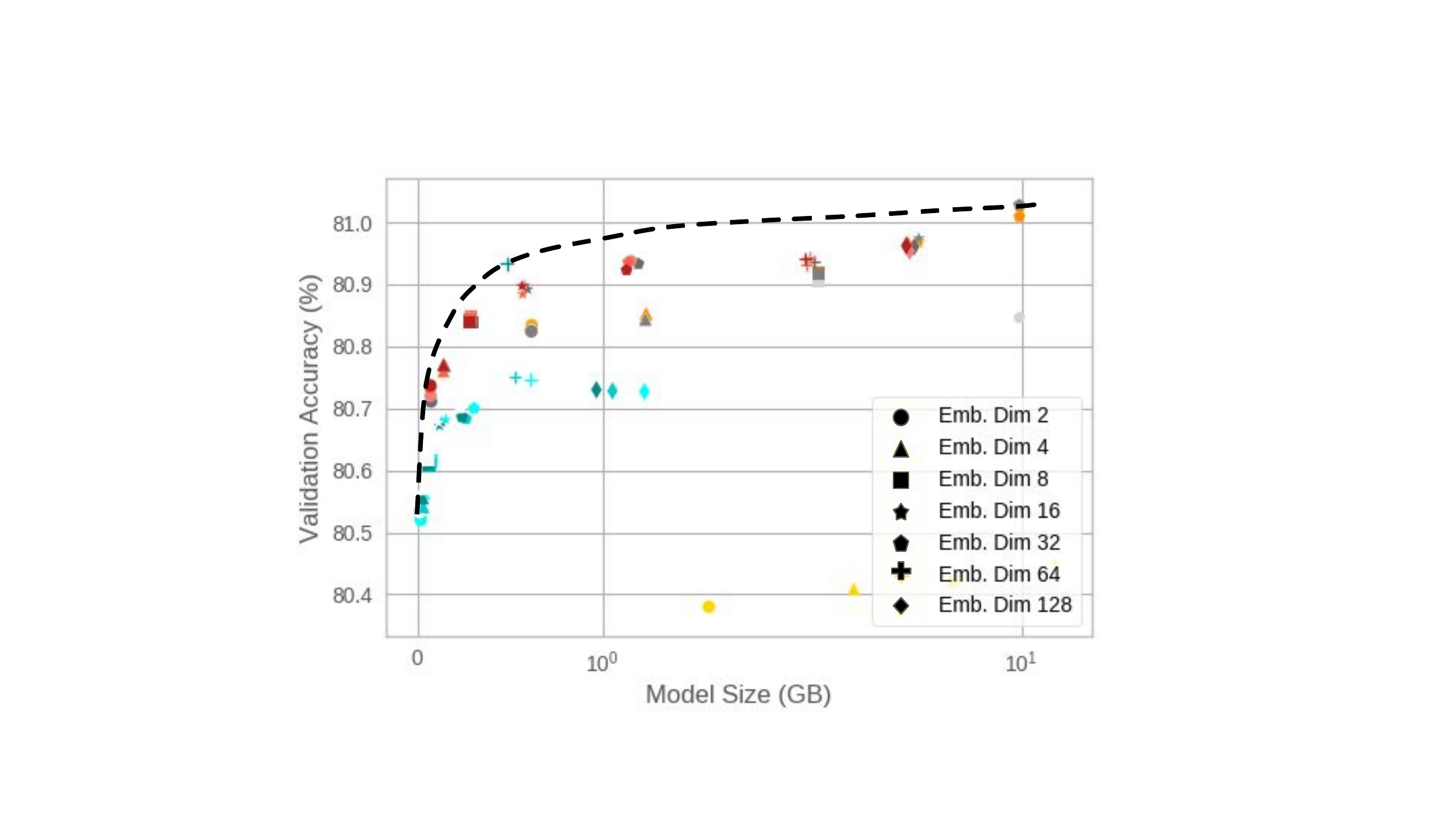}
    \vspace{-1mm}
    \caption{
    The design space demonstrates the potential of DLRM accuracy and model size tradeoff with respect to the tunable parameters, e.g., ranks of the tensorization method (colors), dimensions of embedding (shapes), and number of compressed embedding tables  (brightness). The data points that fall onto the Pareto Frontier (the black curve) represent optimal settings that maximize DLRM accuracy (the y-axis) given a memory size (the x-axis).
    }
    \label{fig:motivation}
    \vspace{-5mm}
\end{figure}

We show significant compression ratios and improved training time performance at the same time with a judicious design and parameterization of the tensor-train compression technique. The compute-to-memory ratio of the underlying hardware can also be taken into account when parameterizing the proposed technique. 
While prior works have demonstrated tensor-train compression techniques for embedding layers in language models~\cite{hrinchuk2020tensorized}, this paper is the first to explore and customize tensor-train compression techniques for DLRMs, with a particular focus on minimizing the significant memory capacity requirement of the embedding layers (over tens to hundreds of GBs, or over 99\% of the total model size).

The main contributions of this paper are as follows:

\setlist{nolistsep}
\begin{itemize}[leftmargin=*,noitemsep]
    \item This work applies tensor-train compression in a new application context, compressing the embedding layers of DLRMs.
    \item Our in-depth design space characterization shows the importance of choosing the right number of embedding tables to compress and the dimension of the compressed tensors. In particular, we quantify the potential trade-off between memory requirements and accuracy. 
    \item To recover accuracy loss, we propose to use a sampled Gaussian distribution for the weight initialization of the tensor cores. Furthermore, to accelerate the training performance of TT-Rec, we introduce a separate cache structure to store frequently-accessed embedding vectors in the uncompressed format, which we show empirically helps accuracy improvement.
    \item We demonstrate the promise of TT-Rec: on Terabyte, TT-Rec achieves higher model accuracy (0.19\% to 0.42\% over the baseline) while reducing the total memory requirement of the embedding tables by 22$\times$ to $112\times$ 
    with a small amount of 10\% training time increase on average, and similarly for Kaggle.
    \item TT-Rec offers a flexible design space between memory capacity,  training time and model accuracy. It is an effective approach especially for online recommendation training. The orders-of-magnitude lower memory requirement with TT-Rec also unlocks a range of modern AI training accelerators for DLRM training.   
\end{itemize}

\section{Background}
\paragraph{Deep learning recommendation models.}




Figure~\ref{fig:tt-dlrm-arch} depicts the generalized model architecture for DLRMs. The shaded structure (with the orthogonal green stripes) represents the baseline while the other shaded structure (with the solid yellow box) depicts the proposed TT-Rec design. There are  two  primary  components: the  Multi  Layer  Perceptron  (MLP)  layer  modules  and  the Embedding Tables (EMBs). The MLP layers are used to process continuous features, such as user age, while the EMBs are used to process categorical features by encoding sparse, high-dimensional inputs into dense, vector representation. 
The encoded vectors, usually of length 16 to 128 in industry-scale recommender systems, are processed by an interaction operation followed by a top MLP layer.
\begin{figure}[t]
    \centering
    \includegraphics[width=\columnwidth]{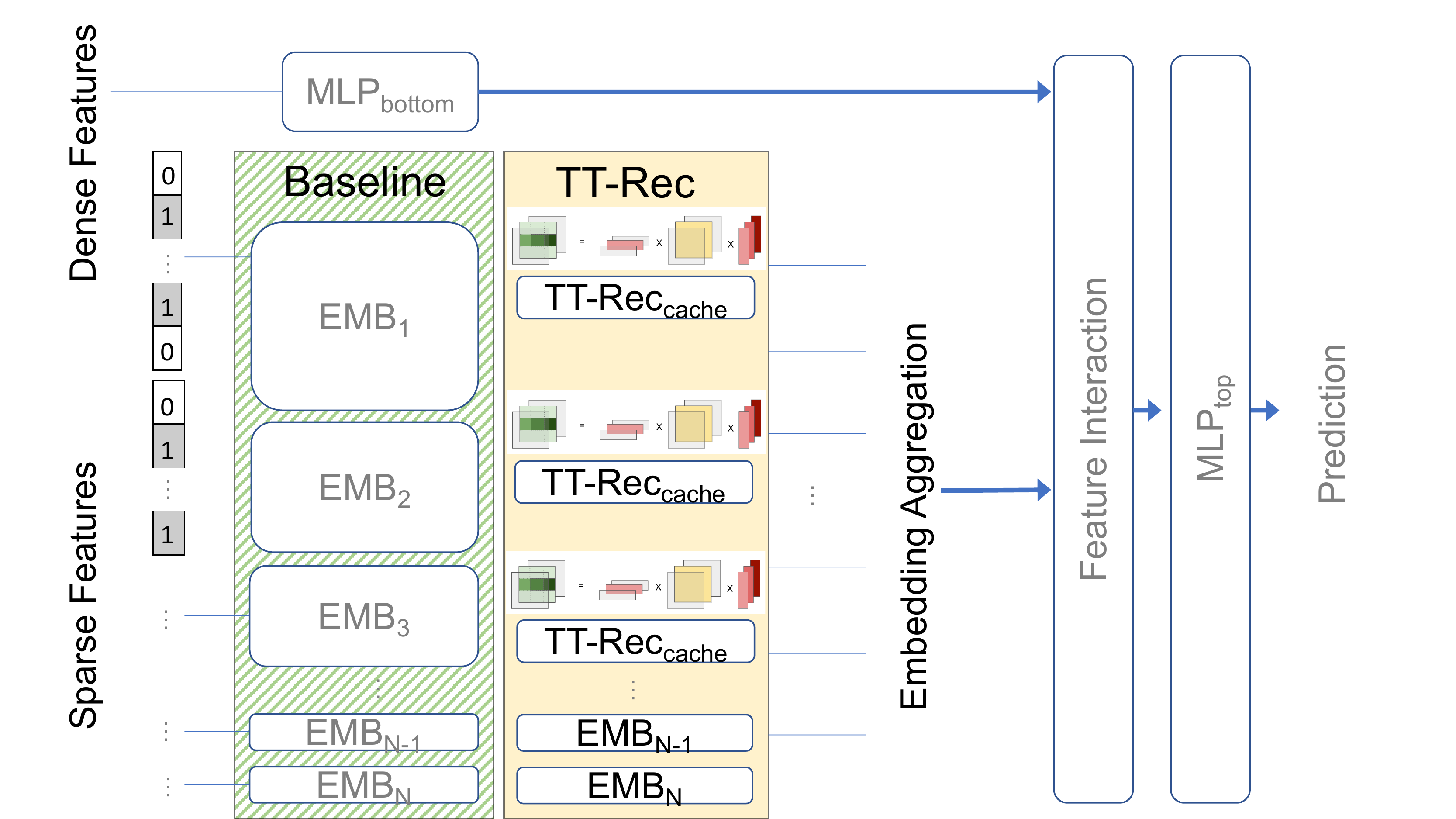}
    \vspace{-5mm}
    \caption{Generalized model architecture for DLRMs. The shaded structure (with orthogonal green stripes) is the baseline DLRM and the yellow box to its right represents the alternative proposed TT-Rec design that would replace the baseline in the stack.
    }
    \label{fig:tt-dlrm-arch}
\end{figure}

The sparse embedding tables pose infrastructure challenges from both the perspectives of storage and bandwidth requirement. There are tens of millions of rows in an embedding table, where the number of rows are growing exponentially for future recommendation models, resulting in memory requirement into the TB scale~\cite{zhao2020distributed}. Furthermore, the EMB lookup operation gathers multiple embedding vectors simultaneously across the tables, making the execution memory bandwidth bound. To address the ever-increasing memory capacity and bandwidth challenges, this work examines a fundamentally different approach---instead of gathering dense embedding vectors from EMBs that store information in the latent space representation, we seek methods to replace large EMBs with a sequence of small matrix products using tensor train decomposition.     


\paragraph{Tensor-train decompositions.}
Similar to matrix decomposition, such as Singular Value Decomposition (SVD)~\cite{xue2013restructuring} and Principal Component Analysis (PCA)~\cite{wold1987principal}, Tensor-Train decomposition is an approach to matrix decomposition by decomposing tensor representation of multidimensional data into product of smaller tensors. 
Tensor-Train (TT) decomposition is a simple and robust method for model compression~\cite{OSELEDETS2011} and has been studied extensively for deep learning application domains, such as computer vision~\cite{Yang2017} and natural language understanding~\cite{Rusu2017}. However, such method has not been investigated for the deep learning recommendation space; thus, its potential remains unknown. Here, we describe the fundamental principle of TT-decomposition and its application to recommendation embedding. 
%

Assume $\mathcal{A} \in \mathbb{R}^{I_1 \times I_2 \times \dots I_d}$ is a $d-$dimensional tensor, where $I_k$ is the size of dimension $k$. $\mathcal{A}$ can be decomposed as
\begin{equation}
    \mathcal{A}(i_1, i_2, \dots, i_d) = \mathcal{G}_1(:, i_1, :)\mathcal{G}_2(:, i_2, :)\dots \mathcal{G}_d(:, i_d, :),
\end{equation}
where $\mathcal{G}_k \in \mathbb{R}^{R_{k-1}\times I_k \times R_k}$, and $R_0 = R_k = 1$ to keep product of the sequence of tensors a scalar. The sequence $\{{R_k}\}_{k=0}^d$ is referred as to \textbf{TT-ranks}, and each 3-dimension tensor $\mathcal{G}_k$ is called a \textbf{TT-core}.

The TT decomposition can also be generalized to compress a matrix $W \in \mathbb{R}^{M \times N}$. We assume that $M$ and $N$ can be factorized into sequences of integers, i.e.,
    $M = \prod_{i = 1}^d m_k,$ and $N = \prod_{i = 1}^d n_k$.
Correspondingly, we reshape the matrix $W$ as a $2d$-dimensional tensor $\mathcal{W} \in \mathbb{R}^{(m_1\times n_1)\times(m_2\times n_2)\times \dots \times (m_d\times n_d)}$, where
\begin{align}
\begin{split}
    \mathcal{W}&((i_1, j_1),(i_2, j_2), \dots, (i_d, j_d)) \\
    &= \mathcal{G}_1(:, i_1, j_1, :) \mathcal{G}_2(:, i_2, j_2, :) \dots \mathcal{G}_d(:, i_d, j_d, :) 
\end{split}
\label{eqn:tt-matrix}
\end{align}
and each 4-d tensor $\mathcal{G}_k \in \mathbb{R}^{R_{k-1}\times m_k \times n_k \times R_k}$, $R_0 = R_d = 1$.
Let $R, m$, and $n$ be the maximal $r_k, m_k$ and $n_k$ respectively for $k = 1, \dots, d$. TT format reduces the space for storing the matrix from $O(MN)$ to $O(dR^2 \max(m,n)^2)$. Table~\ref{tab:kaggle-dim-tt} in Section~\ref{results} illustrates the detailed embedding table sizes and their corresponding compressed dimensions.
\comment{
\textcolor{red}{refer to the table. remove this paragraph}Taking one large embedding table in DLRM for instance, to decompose the matrix $W$ of size $10^7 \times 64$ in a 3-d manner, one can factorize the two dimensions by $10^7 = 200 * 200 * 250$, and $64 = 4*4*4$. The corresponding sizes of the 3 tensor cores are $(1, 200, 4, R)$, $(R, 200, 4, R)$, and $(R, 250, 4, 1)$, which reduces the number of parameters from $6.4*10^8$ to $800R^2 + 1800R$. Taking $R = 16$, the matrix is compressed by $2738\times$; and taking $R = 64$, the matrix is compressed by $188 \times$. We will illustrate in the following sections that such selections are sufficient to preserve the performance of DLRM.}

\label{preliminary}

\section{Proposed Design of TT-Rec}
In this section, we present the proposed design, called \textit{TT-Rec}. TT-Rec customizes the TT-decomposition method to compress embedding tables in deep learning recommendation models (\S~\ref{sec:method-compress}).
An overview of our design, where the large embedding tables are replaced with TT-Rec, is shown in Figure~\ref{fig:tt-dlrm-arch}.
In order to compensate the accuracy loss from replacing embedding vectors with vector-matrix multiplications, we introduce a new way to initialize the element distribution for the TT-cores (\S~\ref{sec:method-init}).
This is an important step, since the distribution of the weights resulting from multiplication of the TT-cores are skewed from TT-cores’ initial distribution due to the product operation.



\subsection{Customizing TT-decomposition for Embedding Table Compression}
\label{sec:method-compress}

Each embedding lookup can be interpreted as a one-hot vector matrix multiplication $w_i^T = e_i^T W$, where $e_i$ is a vector with $i$-th position to be 1, and 0 anywhere else. In more complex scenarios, an embedding lookup represents a weighted combination of multiple items $w = \sum_k i_k^T W$. We compress the embedding table $W$ as in Equation (\ref{eqn:tt-matrix}),
and hence embedding lookup operation of row $i = \sum_{i=1}^d i_k \prod_{j = i+1}^d I_j$ is be represented the following:
\begin{equation}
    w_i = \mathcal{G}_1(i_1, :, :) \mathcal{G}_2(:, i_2, :, :) \dots \mathcal{G}_d(:, i_d, :) 
    \label{eqn:tt-embedding}
\end{equation}
Let $w_i^{(k)} \in \mathbf{R}^{\prod_1^{k-1} n_i \times n_kR_k}$ be the partial product of the first $k$ TT-cores in Equation~\ref{eqn:tt-embedding}. The tensor multiplication of $w_i^{(k)}\mathcal{G}_{k+1}(:, i_k, :, :)$ can be unfolded and formulated as a matrix-matrix multiplication where $w_i^{(k)} \in \mathbf{R}^{\prod_1^k n_i \times R_k}$ and $\mathcal{G}_{k+1}(:, i_k, :, :) \in \mathbf{R}^{R_k\times n_{k+1}R_{k+1}}$.

In uncompressed models, storing and updating an embedding table of size $M \times N$ requires for $O(MN)$ space, while in TT-Rec, we propose to only learn the gradient of the loss function $L$ with respect to the TT-cores through backward propagation (Equation~\ref{eqn:chain-rule}).

%
\begin{align}
    &\frac{\partial L}{\partial \mathcal{G}_k(:, i_k, :)} =
        \begin{cases}
            w_i^T\frac{\partial L}{\partial y}, \text{ if $k = d$}\\
            (w_i^{(k)})^T\frac{\partial L}{\partial w_i^{(k+1)}},  \text{ if $1< k < d$}
        \end{cases}\\
    & \frac{\partial L}{\partial w_i^{(k)}} =
        \begin{cases}
            \frac{\partial L}{\partial y} \mathcal{G}_d^T(:, i_d, :), , \text{ if $k = d$}\\
            \frac{\partial L}{\partial w_i^{(k+1)}} \mathcal{G}_{k}^T(:, i_k, :, :),  \text{ if $1< k < d$}
        \end{cases}
     \label{eqn:chain-rule}  
\end{align}

\textit{Not all sparse features are equally important} is a key observation used in the design of TT-Rec. Data samples in industry-scale recommendation use cases often follow a Power or Zipfian distribution~\cite{wu2020developing}. A small subset of sparse features capture a significant portion of training samples that index to the set of the features in the embedding tables. 
This observation is particularly important to reduce the data movement overhead when GPUs are employed as AI training accelerators. 

One of the performance optimization potential unlocked by TT-Rec is that the collection of DLRMs that require memory capacities larger than that of training accelerators can now be accelerated with the accelerators. In addition to optimizing TT-Rec's performance using GPUs, we introduce a caching scheme to retain the frequently-accessed embedding vectors/rows in the EMBs. The cache enables TT-Rec to exploit the aforementioned temporal locality by storing most frequently-accessed embedding rows in the uncompressed format. By doing so, TT-Rec minimizes the need of computation. The detail of the performance optimization implementation are described in detail later in Section~\ref{performance}. 

\begin{figure}[t]%
       \centering
       \subfloat{\includegraphics[width=0.45\linewidth]{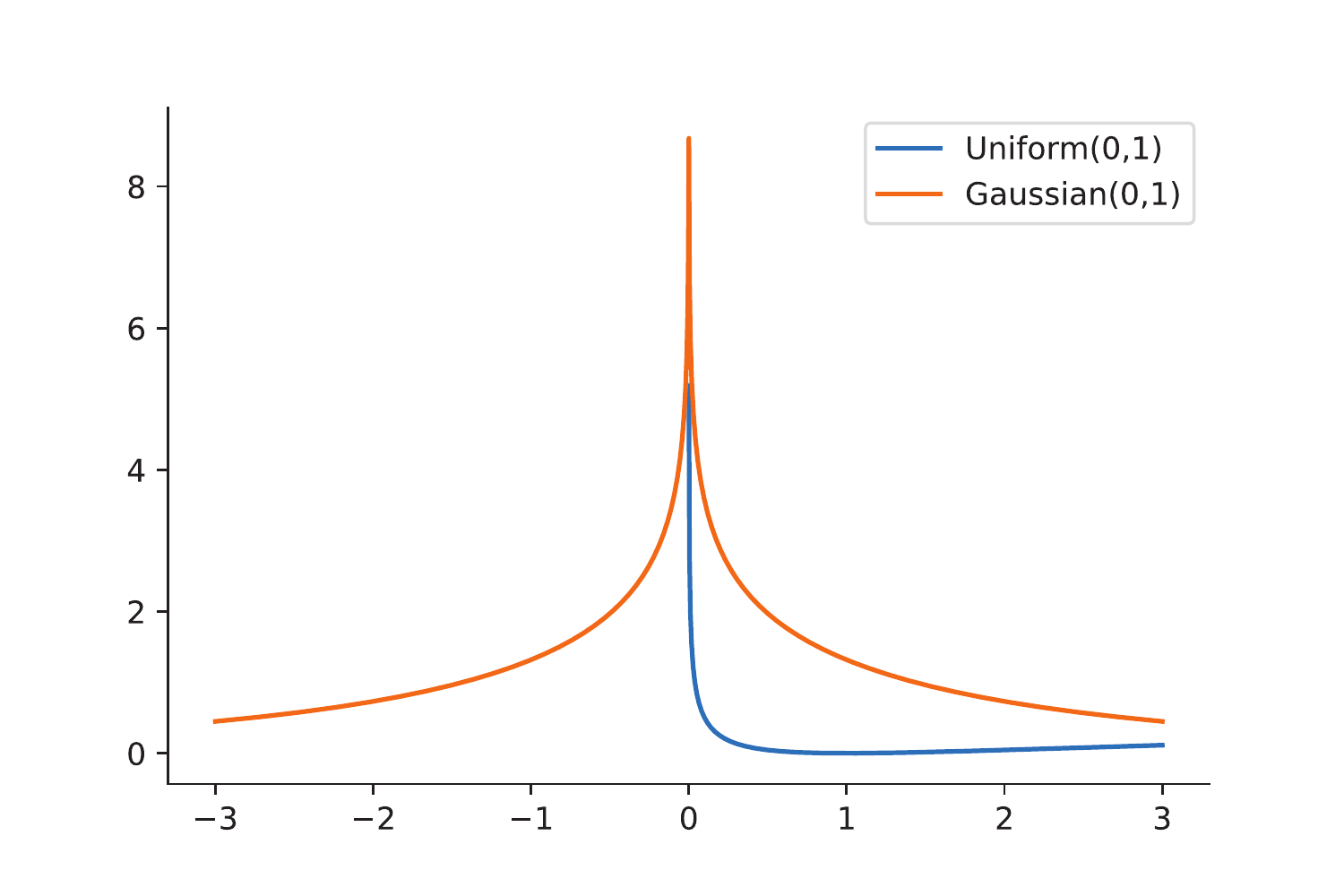} }%
       \subfloat{\includegraphics[width=0.45\linewidth]{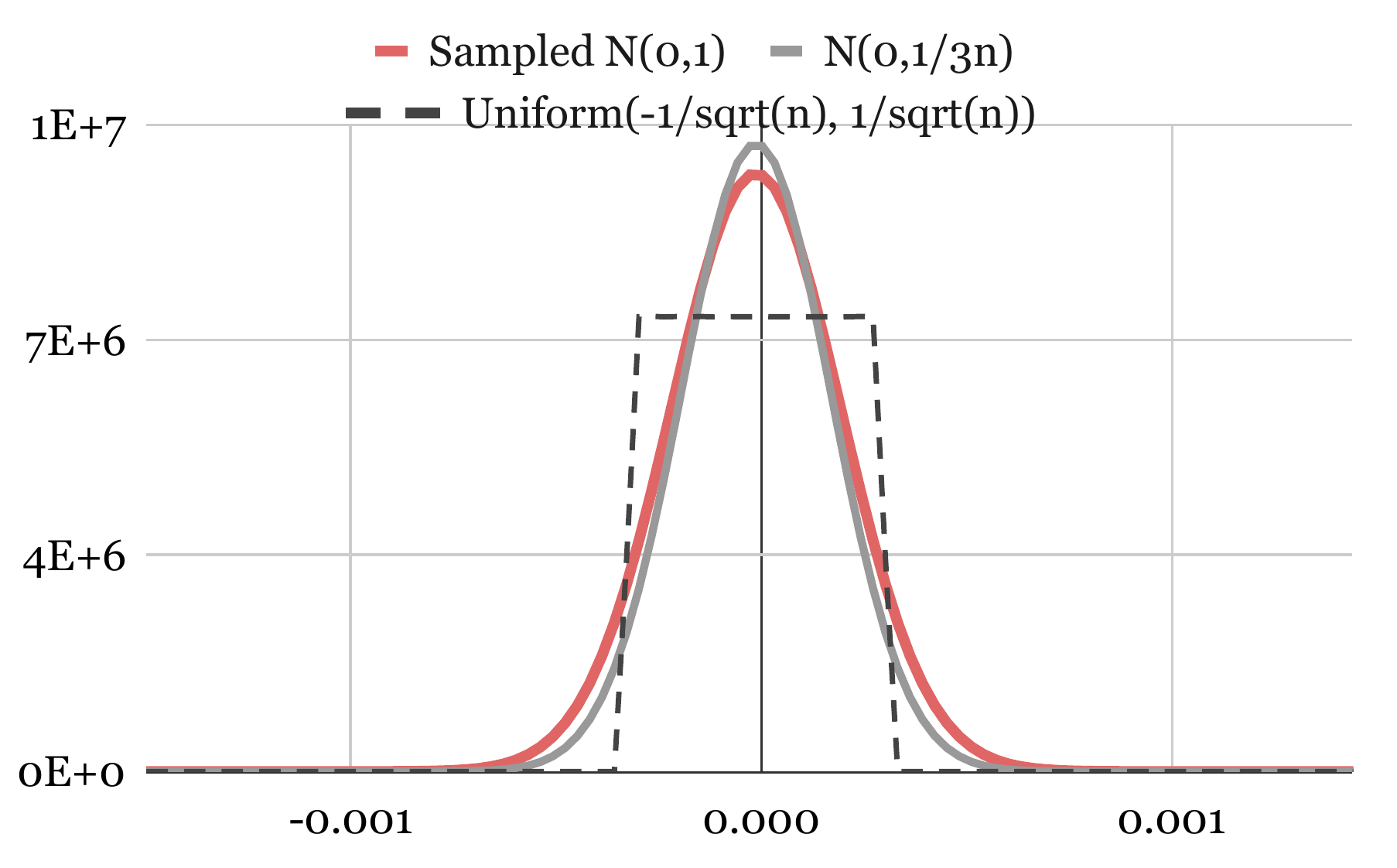} }%
       \vspace{-4mm}
       \caption{The probability density function (PDF) of product of three independent and identically distributed (i.i.d.) random variables (left) from Uniform(0,1) and from $\mathcal{N}$(0,1); and from (right) sampled Gaussian distribution (red), comparing to $\mathcal{N}(0,1)$ (grey) and uniform distribution.
        }%
       \label{fig:rv-pdf}%
       \vspace{-0.25cm}
   \end{figure}
   
\subsection{Weight Initialization}
\label{sec:method-init}

Replacing an embedding vector lookup with a sequence of tensor computation to approximate the original vector may introduce an accuracy loss in TT-Rec. To compensate the accuracy loss, we introduce a new way to initialize the weights of the TT-cores.
Typically, larger TT-ranks provide lower compression ratios while achieving model accuracy closer to that of the baseline. However, when increasing the TT-rank value in TT-Rec, we find that 
the corresponding accuracy improvements saturate quickly despite the decreasing compression ratios, as we show later in detail in Section~\ref{results}.
This led us to investigate the initialization behavior of the uncompressed baseline and TT-Rec---the initial distribution of the TT-cores can significantly influence the model quality.
 
Since the TT decomposition approximates the full tensor, we hope to find the best configuration of the uncompressed model and have TT-Rec approximate the same configuration. For the uncompressed DLRMs, 
uniform random distribution usually outperforms normal distribution.
For validation, we initialize DLRMs with different forms of Gaussian distribution.
Then, we determine the
correlation between the model accuracy and the distance between the Gaussian and uniform distribution.

To approximate a uniform distribution on $[a,b]$ by a Gaussian distribution $\mathcal{N}(\mu, \sigma^2)$, we want to minimize the KL-divergence
\begin{equation*}
    \mathcal{D}_{KL}(P|Q) = -\int_{-\infty}^{\infty}P(x) \ln \frac{P(x)}{Q(x)} dx
\end{equation*}
where $P(x) = \frac{1}{b-a}$ if $x\in [a, b]$, otherwise 0; and $Q(x) = \frac{1}{\sqrt{2\pi \sigma^2}}e^{\frac{(x-\mu)^2}{2\sigma^2}}$.
\comment{
\begin{align}
    L(\mu, \sigma) &= -\int_{a}^{b}\frac{1}{b-a}  \ln(\frac{1}{b-a}/(\frac{1}{\sqrt{2\pi \sigma^2}}e^{\frac{(x-\mu)^2}{2\sigma^2}} ))dx\\
    &= \ln(b-a) - \frac{1}{2}\ln(2\pi\sigma^2) \\
    &- \frac{1/3(b^3-a^3) - \mu(b^2-a^2) + \mu^2(b-a)}{2\sigma^2(b-a)}
\end{align}
To minimize the loss, we compute the partial derivatives
\begin{align}
    \frac{\partial L}{\partial \mu} &= \frac{a+b}{2\sigma^2} - \frac{2\mu}{2\sigma^2}\\
    \frac{\partial L}{\partial \sigma} &= \frac{1}{\sigma} + \frac{1/3(a^2 + ab + b^2)-1/4(a+b)^2}{\sigma^3}
\end{align}
Solving the equations, we have
}
Minimizing the KL-divergence for a given uniform distribution using the first-order approach results in
\begin{equation*}
    \mu = \frac{a+b}{2}, \quad \sigma^2 = \frac{(b-a)^2}{12}.
\end{equation*}
Therefore, to best approximate the uniform distribution used in the DLRM (Uniform$(\frac{-1}{\sqrt{n}}, \frac{1}{\sqrt{n}})$), we should adopt $\mathcal{N}(0, \frac{1}{3n})$ for initialization. Table~\ref{tab:kl-accu} shows the accuracy of the DLRMs that are initialized with various Gaussian forms, where accuracy gap is proportional to the KL-divergence between the Gaussian and uniform distribution.

\begin{table}[]
\vspace{-0.25cm}
\vspace{2pt}
\begin{tabular}{c|c|c}
\hline
\textbf{Distribution} & \textbf{KL-divergence} & \textbf{Accuracy} \\ \hline
uniform($(\frac{-1}{\sqrt{n}}, \frac{1}{\sqrt{n}})$)  & 0             & 79.263 \%       \\ \hline
$\mathcal{N}(0,1)$           & $ c -\frac{1}{6n}$            & 78.123 \%      \\
$\mathcal{N}(0,1/2)$           & $c - \frac{1}{3n} + 0.34$             & 78.371 \%        \\
$\mathcal{N}(0,1/8)$         & $c - \frac{4}{3n} + 1.4$             & 78.823  \%       \\
$\mathcal{N}(0,1/3n)$   & $-0.17$            & 79.256 \%      \\
$\mathcal{N}(0,1/9n^2)$        & $\frac{1}{2}\ln \frac{18}{\pi n} -1.5n$             & 79.220 \%  \\ \hline      
\end{tabular}
\caption{The accuracy of uncompressed DLRM with embedding tables initialized from various Gaussian distributions, $c = 0.5\ln \frac{2}{\pi n} < -7$, compared with uniform distribution.
}
\vspace{-5mm}
\label{tab:kl-accu}
\end{table}
\comment{
\begin{figure*}
    \centering
    \includegraphics[width=\textwidth]{Figures/dlrm-full-dist.png}
    \vspace{-5mm}
    \caption{Left: product of 3 random variables i.i.d. from uniform(0,1). Right: product of 3 random variables i.i.d. from normal(0,1).}
    \label{fig:dlrm-dist}
\end{figure*}
}
For TT-Rec, we want the product of TT cores to either approximate Uniform$(\frac{-1}{\sqrt{n}}, \frac{1}{\sqrt{n}})$ or $\mathcal{N}(0, \frac{1}{3n})$ during initialization. Initializing the TT-cores is challenging since the distribution of product of random variables is non-trivial. In practice, both uniform and normal distributions can be used to reasonably initialize TT-cores~\cite{hrinchuk2020tensorized}. However, the TT product of these distributions do not serve as an appropriate approximation to uniform distribution as Figure~\ref{fig:rv-pdf} (left) shows.

To better approximate the best Gaussian distribution shown in Table~\ref{tab:kl-accu}, we reduce the amount of values close to zero
in each core by sampling the Gaussian distribution.
Our sampling method is shown in Algorithm~\ref{alg:sampled_gaussian} in Appendix~\ref{sec:appendix_algorithm}.
We validate the product of the TT-cores with Algorithm~\ref{alg:sampled_gaussian} in Appendix~\ref{sec:appendix_algorithm}. In Figure~\ref{fig:rv-pdf}-(right), we compare the product with $\mathcal{N}(0, 1/3n)$, which serves as the best approximation to Uniform$(-1/\sqrt{n}, 1/\sqrt{n})$. We will show in Section~\ref{sec:result-accuracy} that our algorithm achieves the highest accuracy in training.

\comment{
Let 
\[
\mathcal{A}(i_1, i_2, i_3) = 
\mathcal{G}_1(i_1, :)\mathcal{G}_2(:, i_2, :)\mathcal{G}_3(:, i_3)
\]
be the TT decomposition of $\mathcal{A}$, and all TT-core entries are i.i.d random. Let $Z = X_1X_2X_3$, and the PDF of elements in $\mathcal{A}$ can be computed by scaling the PDF of $Z$ by a constant factor.

\noindent
\textbf{Case 1. Product of Uniform Random Variables:} 
We use a 3-$d$ TT decomposition to illustrate the situation. Let 
\[
\mathcal{A}(i_1, i_2, i_3) = 
\mathcal{G}_1(i_1, :)\mathcal{G}_2(:, i_2, :)\mathcal{G}_3(:, i_3)
\]
be the TT decomposition of $\mathcal{A}$, and all TT-core entries are random variables i.i.d. from uniform distribution Uniform$(0,1)$. For simplicity, we omit the summations and let $Z = \sum_1^R\sum_1^R\sum_1^R X_1X_2X_3$. 

For simplicity we assume $X_i'$s are i.i.d from Uniform(0,1), since the analysis for other uniform distributions can be conducted in a similar manner. The CDF of Z is
\begin{align*}
    F(z) &= 
 - \int_{x = 0}^z \log x dx - \int_{x = z}^1 z/x \log x dx
\end{align*}
Hence the density of $Z$ is
$
f(z) = 0.5(\log z)^2, 0< z <=1. 
$

\noindent
\textbf{Case 2. Product of Gaussian Random Variables:}
If the entries of TT-cores are i.i.d from $\mathcal{N}(0,1)$, the product of them can be interpreted by the Meijer G-function~\cite{product-gaussian}. In Figure~\ref{fig:rv-pdf}-(a), the plot on shows the density of the product of $Z$, each from Uniform(0,1) or $\mathcal{N}$(0,1). Both distributions do not approximate uniform appropriately, while Gaussian initialization for TT-cores achieve higher accuracy than uniform in all of our experiments.
}

\label{method}

\section{Performance Optimizations}
Replacing embedding vectors with TT-core multiplications trades off memory with computation. Depending on the embedding vector lookup patterns and the underlying system architectures, training time overhead from the computations can vary. In order to mitigate the associated performance overhead, we introduce a cache structure as part of TT-Rec to leverage the observation -- a small subset of features (rows in embedding tables) constitute most of the embedding table accesses. Thus, here, we describe the execution flow for deep learning recommendation model training: forward propagation and backward propagation implementations in Section~\ref{sec:training-implementation}. Then, we introduce TT-Rec's cache design to improve the training time performance in Section~\ref{sec:training-cache}. 

\begin{figure}[t]
    \centering
    \includegraphics[width=0.9\columnwidth]{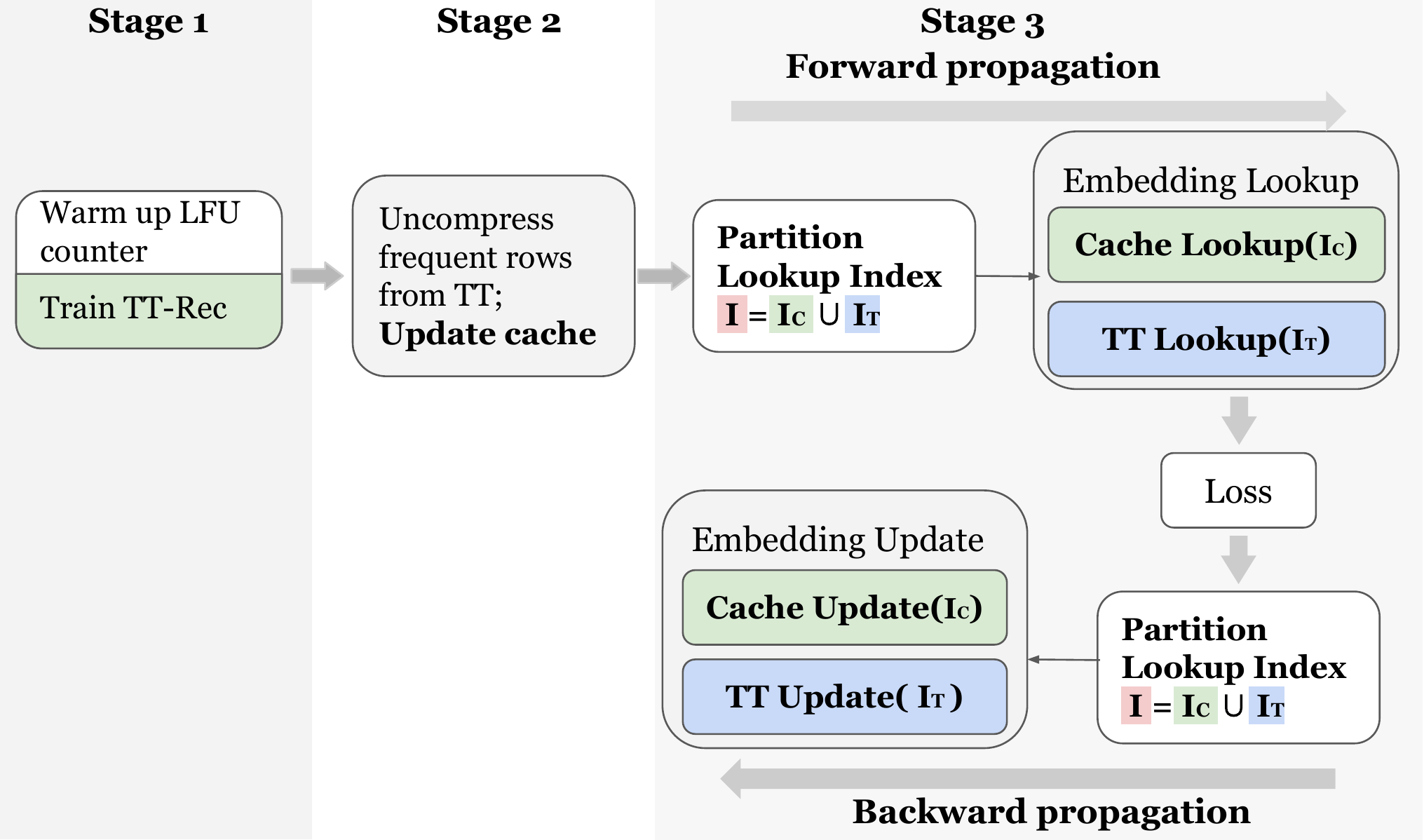}
    \caption{Multi-stage training process with caching.}
    \label{fig:lfu-multi-stage}
    \vspace{-0.25cm}
\end{figure}

\subsection{TT-Rec Implementation}
\label{sec:training-implementation}

The embedding layer takes row indices as input in Compressed Sparse Row (CSR) format. The input is translated into multiple embedding bags, in which the embedding vectors belonging to an embedding bag are pooled together by summation or average. 
Let $indices[]$ be an integer array of length $n$, where $indices[i] < Index\_Range$ for $0\le i < n$, and $offsets[]$ be an integer array of length $m \le n$ to specify the starting index in $indices[]$ of each embedding bag. Specifically, 
to compute for $j$th embedding bag, the algorithm will summarize or average the rows $\{W(indices[k], :)\}$ for $offsets[j] \le k < offsets[j+1]$. In TT-Rec, this is computed as a sequence of small matrix-matrix products.

\begin{align}
    &output_i = \sum_{k = offset[i]}^{offset[i+1]} \alpha_{indices[k]}w_{indices[k]}\\
    &= \sum_{k = offset[i]}^{offset[i+1]} \alpha_{indices[k]}\mathcal{G}_1(i^k_1, :, :)  \dots \mathcal{G}_d(:, i^k_d, :) 
    \label{eqn:tt-embedding-bag}
\end{align}
, where $i_j^k$ indexes the slice in $\mathcal{G}_j$ that is used for computing vector $W(indices[k], :)$, and $\alpha_i$ is the per sample weight associated with each embedding vector.

To obtain a high-performance implementation of TT-EmbeddingBag, we use a batched GEMM implementation from cuBLAS~\cite{batchedgemm}. The algorithm computes for a batch of $B$ embedding vectors and stores the intermediate results in $tr_i$, $0\le i \le d-2$. Given a batch of queried embedding vectors, the algorithm reduces the batch to embedding bags in parallel. Algorithm~\ref{alg:tt-emb} of Appendix~\ref{sec:appendix_algorithm} shows the pseudocode of the batched embedding kernel for the 3-dimensional TT decomposition. This algorithm can be generalized to any arbitrary TT dimension by extending lines 5-10 to set up the pointers to intermediate TT results.

The backward propagation algorithm for the 3-dimensional TT compression is illustrated by Algorithm~\ref{alg:tt-emb-back} of Appendix~\ref{sec:appendix_algorithm}. Based on the chain rule as described in Equation~\ref{eqn:chain-rule}, we compute the gradient w.r.t. the embedding bags, and accumulate the gradients into each TT-core.

\subsection{A Least-Frequently-Used Cache for TT-Rec}
\label{sec:training-cache}
While TT-Rec reduces the memory requirement of embedding tables, this method introduces latency through additional computations from (1) reconstructing values of the queried embedding rows from the TT format in the forward propagation, (2) determining the gradient for each TT-core in the backward propagation, and (3) recomputing the intermediate GEMM results for the gradient computation. 


\begin{table*}[t]
\resizebox{0.9\textwidth}{!}{%
\begin{tabular}{@{}c|c|c|c|c|c|c|c|ccc@{}}
\toprule
\multicolumn{2}{c|}{\begin{tabular}[c]{@{}c@{}}Emb. Table Dimensions\end{tabular}} & \multicolumn{3}{c|}{TT-Core Shapes}                  & \multicolumn{3}{c|}{\# of TT Parameters} & \multicolumn{3}{c}{Memory Reduction}                                           \\ \midrule
\# Rows                                  & Emb. Dim.                                 & $\mathcal{G}_1$ & $\mathcal{G}_2$ & $\mathcal{G}_3$ & $R=16$    & $R=32$    & $R=64$   & \multicolumn{1}{c|}{$R=16$} & \multicolumn{1}{c|}{$R=32$} & $R=64$ \\ \midrule
10131227                                     & 16                                        & (1, 200, 2, $R$)  & ($R$, 220, 2, $R$)  & ($R$, 250, 4, 1)  & 135040        & 495360        & 1891840      & \multicolumn{1}{c|}{1200} & \multicolumn{1}{c|}{327}  & 86   \\
8351593                                      & 16                                        & (1, 200, 2, $R$)  & ($R$, 200, 2, $R$)  & ($R$, 209, 4, 1)  & 122176        & 449152        & 1717504      & \multicolumn{1}{c|}{1094} & \multicolumn{1}{c|}{297}  & 78   \\
7046547                                      & 16                                        & (1, 200, 2, $R$)  & ($R$, 200, 2, $R$)  & ($R$, 200, 4, 1)  & 121600        & 448000        & 1715200      & \multicolumn{1}{c|}{927}  & \multicolumn{1}{c|}{252}  & 66   \\
5461306                                      & 16                                        & (1, 166, 2, $R$)  & ($R$, 175, 2, $R$)  & ($R$, 188, 4, 1)  & 106944        & 393088        & 1502976      & \multicolumn{1}{c|}{817}  & \multicolumn{1}{c|}{222}  & 58   \\
2202608                                      & 16                                        & (1, 125, 2, $R$)  & ($R$, 130, 2, $R$)  & ($R$, 136, 4, 1)  & 79264         & 291648        & 1115776      & \multicolumn{1}{c|}{445}  & \multicolumn{1}{c|}{121}  & 32   \\
286181                                       & 16                                        & (1, 53, 2, $R$)   & ($R$, 72, 2, $R$)   & ($R$, 75, 4, 1)   & 43360         & 160448        & 615808       & \multicolumn{1}{c|}{106}  & \multicolumn{1}{c|}{28}   & 7    \\
142572                                       & 16                                        & (1, 50, 2, $R$)   & ($R$, 52, 2, $R$)   & ($R$, 55, 4, 1)   & 31744         & 116736        & 446464       & \multicolumn{1}{c|}{72}   & \multicolumn{1}{c|}{19}   & 5    \\ \bottomrule
\end{tabular}%
}
\caption{The original dimensions of Kaggle's 7 largest embedding tables in DLRM and their respective TT decomposition parameters.}
\label{tab:kaggle-dim-tt}
\vspace{-0.25cm}
\end{table*}

Recomputation of the intermediate results, in Algorithm~\ref{alg:tt-emb-back} (line 3) of Appendix~\ref{sec:appendix_algorithm}, can be eliminated by storing tensors from the forward pass. This reduces the latency but comes with slightly increased memory footprint and memory allocation time. The dominating source of performance optimization potential, however, comes from the aforementioned distribution of sparse features in the training samples. 
The row access frequency follows the Power Law distribution in the largest embedding tables, i.e. a few embedding vectors are recurrently accessed throughout training.
%


To consider this unique characteristics of recommendation data samples, we design and implement a software cache structure to store the frequently-used embedding vectors on the GPU. The cache stores an uncompressed copy of the frequently accessed embedding vectors.
As Section~\ref{sec:cache_performance_analysis} later shows, we perform performance sensitive analysis over a wide range of cache sizes and empirically determine that devoting 0.01\% of the respective embedding table as TT-Rec cache is sufficient from both model accuracy's and training time's perspectives.

Given the sequence of embedding bags, the indices are first partitioned into two different groups: 
$cached\_indices$ and $tt\_indices$. The cached embedding rows are directly fetched from the cache whereas the embedding vectors from the $tt\_indices$ group are computed, following the forward propagation
operations described in Algorithm~\ref{alg:tt-emb-back}.
Then, during the backward propagation, the cached, uncompressed vectors can be simply updated with $W' = W + \frac{\partial L}{\partial W}$, while the non-cached vectors
are updated to TT-cores as described in Algorithm~\ref{alg:tt-emb}.
%
In this way, the weights of the two index sets are learned separately by the cache and the TT cores.

In order to offset the cache population overheads,
TT-Rec adopts a semi-dynamic cache, where the frequently-accessed
embedding vectors are loaded into cache
only every 100s to 1000s of iterations,
initialized from TT cores.
In order to track the frequencies of the all the existing indices,
an open addressing hash table is used.
On the other hand,
the learned weights in the cache would be discarded when an eviction happens. In practice, this strategy does not affect training accuracy as the evicted cache lines are not accessed frequently and therefore contribute less to the overall model.
We chose this strategy
as decomposing the evicted vectors and updating the decomposed
parameters with the existing TT cores are equivalent to dynamically
tracking TT decomposition for a streaming matrix, which is a challenging algebraic problem itself.

Figure~\ref{fig:lfu-multi-stage} summarizes the multi-stage training process. The model training starts with the TT embedding tables only. The first few iterations (e.g. 10\% training samples) are used to warm up the cache state. The most frequently accessed embedding vectors will be stored in the cache as uncompressed. Depending on the phase behavior, one might consider updating the cache and repeat the warm up process periodically. Based on our empirical observation, the set of the most-frequently-accessed vectors are stable over window, indicating little periodic warm-up need. 

\label{performance}

\section{Experimental Setup}

\paragraph{Deep Learning Recommendation Model Parameters and Data Sets:}

We implement the proposed TT-Rec design over the open-source MLPerf reference implementation of the DLRM recommendation model architecture (MLPerf-DLRM)~\cite{Naumov2019,mlperf-training,mattson2020mlperf-ieeemicro,mlperf}.
We train TT-Rec with the Criteo Kaggle Display Advertising Challenge Dataset (Kaggle)\footnote{\url{https://labs.criteo.com/2014/02/kaggle-display-advertising-challenge-dataset/}} and Criteo Terabyte Click Logs (Terabyte)\footnote{\url{https://labs.criteo.com/2013/12/download-terabyte-click-logs/}}. In both the datasets, each data sample consists of 13 numerical features and 26 categorical features, and a binary label. Kaggle contains 7 days of ads click data, whereas Terabyte contains 24 days of click data (4.3 billion records). The 26 categorical features are interpreted into 26 embedding tables in TT-Rec, where each row in the embedding table corresponds to an element in that category.
Figure~\ref{fig:dlrm_reduction}-Baseline bars show the size and composition the embedding tables in the two datasets. Table~\ref{tab:kaggle-dim-tt} summarizes the dimensions of the 7 largest embedding tables when training MLPerf-DLRM with Kaggle and the respective TT decomposition parameters.


To study the effectiveness of TT-Rec, we adopt the same hyperparamters as specified in the MLPerf-DLRM reference implementation, including the embedding dimensions, MLP dimensions, learning rate, and batch size. Both datasets are trained with the SGD optimizer.
Note, as the dimension of the embedding increases from 64 to 512, the total memory requirement is over 96 GB, exceeding the latest GPU memory capacity. This is when TT-Rec shines. The uncompressed baseline has to run on CPUs or multiple GPUs via model parallelism (which requires extra all-to-all communication overheads) while TT-Rec enables recommendation training on GPUs with data parallelism.

For accuracy evaluation, we report the test accuracy (\%) as well as the BCE loss. We train TT-Rec for a single epoch using all the data samples in Kaggle. For Terabyte, we downsize the negative training samples by 0.875, as specified by the MLPerf-DLRM benchmark. 



All the evaluation results are obtained by training Kaggle dataset on an NVIDIA Tesla V100-SXM2 GPU with an Intel Xeon E5-2698 CPU. Terabyte dataset results are obtained on the same system but using eight CPUs with a single GPU due to its larger system memory requirement.

\label{experimental_setup}

\section{Experimental Results}

\begin{figure}[t]
    \centering
    \includegraphics[width=\columnwidth]{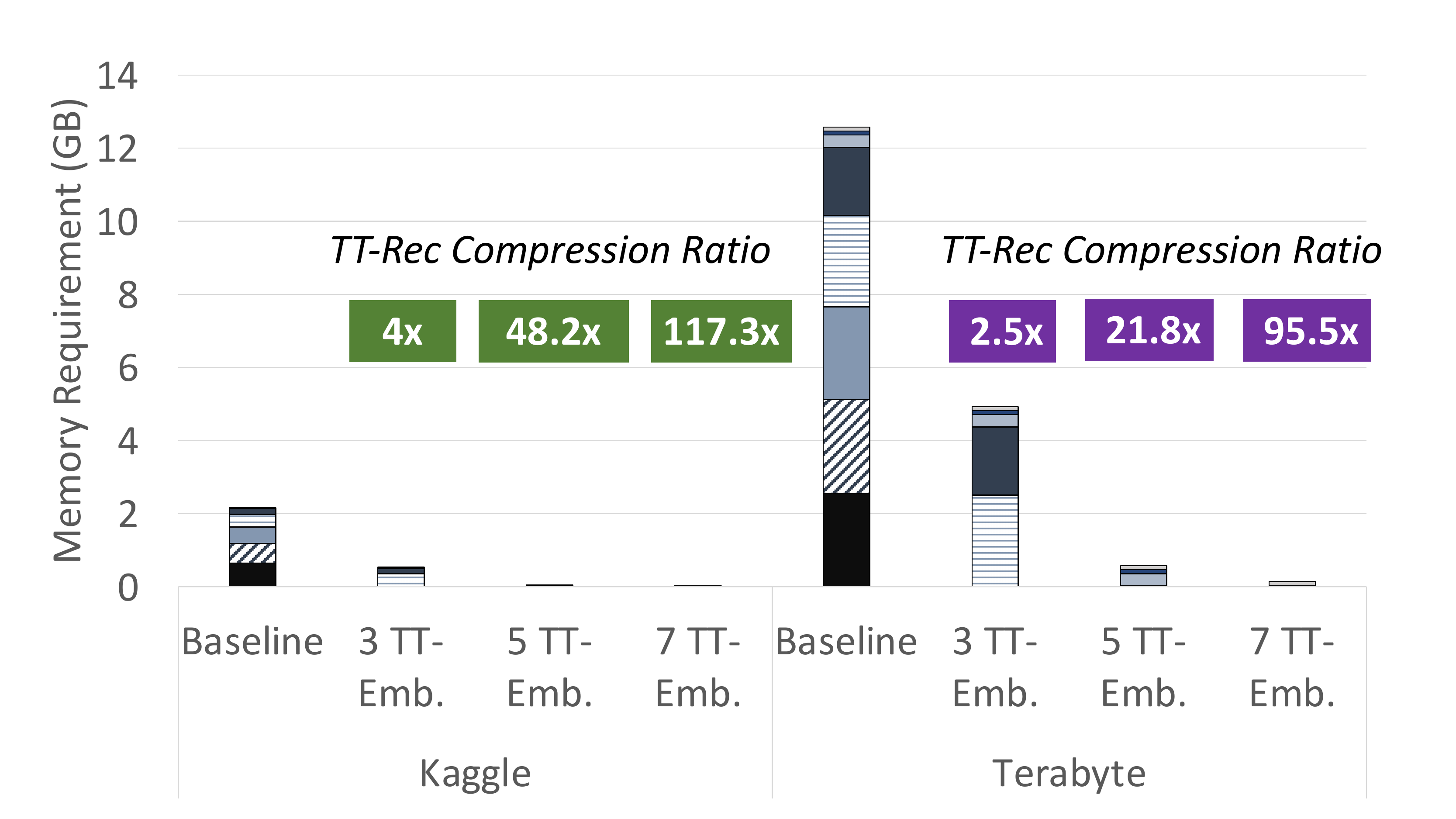}
    \vspace{-0.5cm}
    \caption{TT-Rec shows significant model size reduction when compressing different number of embeddings for Kaggle in green, and Terabyte in purple labels (TT-rank of 32).
    }
    \label{fig:dlrm_reduction}
    \vspace{-3mm}
\end{figure}

\begin{figure*}[bht]
     \centering
     \includegraphics[width=0.97\textwidth]{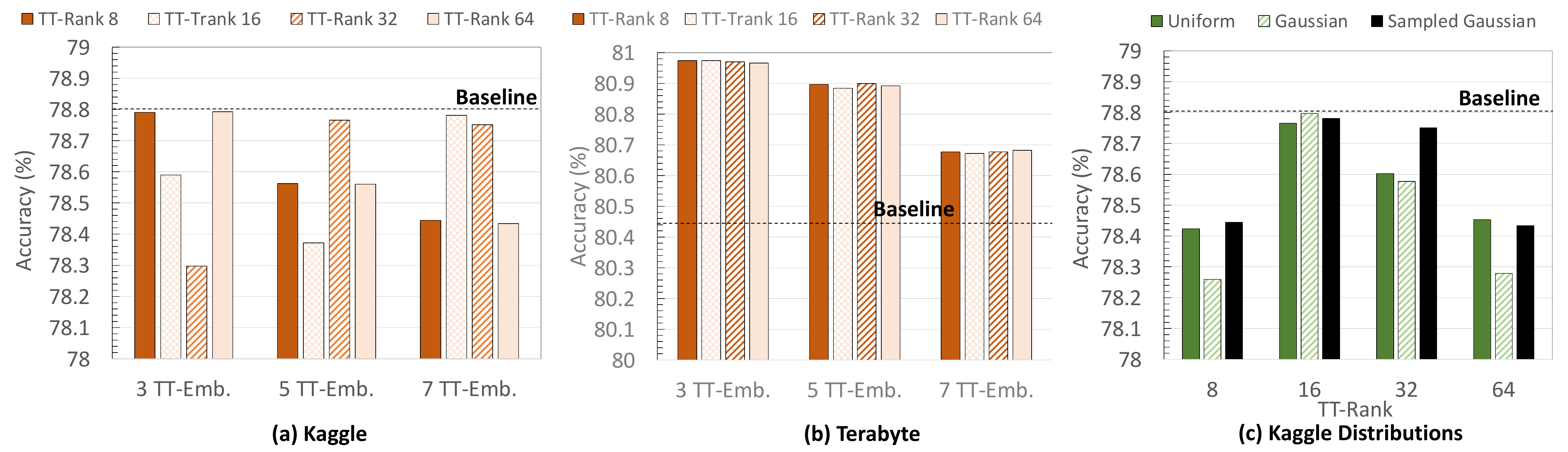}
        \vspace{-5mm}
        \caption{(a) and (b) Validation accuracy of TT-Rec with various tables compressed and TT ranks training with Kaggle and Terabyte respectively. (c) Validation accuracy of TT-Rec using different initialization techniques on Kaggle.}
        \label{fig:accuracy}
\end{figure*}
In this section, we present the evaluation results for the proposed TT-Rec design (\S\ref{experimental_setup}). We evaluate TT-Rec across the three important dimensions of DLRMs: memory capacity reduction (\S\ref{sec:capacity-reduction}), model quality (\S\ref{sec:result-accuracy}), and training time performance (\S\ref{sec:training-time-ttrec}). 
Then, \S\ref{sec:implementation-efficiency}, \S\ref{sec:cache_performance_analysis} and \S\ref{sec:tt-rec-pooling} provide a deeper performance analysis for the TT-Embedding kernel implementation, the TT-Rec cache design, and embedding-dominated DLRMs, respectively.


\paragraph{TT-Rec:} Overall, TT-Rec demonstrates to be an effective technique. 
For Kaggle, TT-Rec reduces the overall model size requirement by 117$\times$ from 2.16 GB to 18.36 MB. The significant capacity reduction comes with a relatively small amount of training time increase by 14.3\% while maintaining the accuracy as the baseline. 
Similarly, for Terabyte, the overall model size requirement is reduced by 112$\times$ from 12.57 to 0.11 GB.
This capacity reduction also comes with a relatively small amount of training time increase by 13.9\%. Finally, the model quality experiences negligible degradation. For example, for both Kaggle and Terabyte, using TT-Rec to train the five largest embedding tables in the TT-Emb. format using TT-rank of 32 leads to model accuracy loss within 0.03\%.




\subsection{Memory Capacity Requirement with TT-Rec}
\label{sec:capacity-reduction}

TT-Rec achieves significant compression ratios for the embedding tables of Terabyte and Kaggle, by as much as 327$\times$ and by an average of 181$\times$ (with TT-rank of 32). In the uncompressed baseline, the 7 largest tables constitutes 99\% of the model. For Kaggle, with TT-Rec, the memory requirement of the 7 embedding tables is reduced from 2.16 GB to only 18 MB, leading to 112$\times$ model size reduction. 

Figure~\ref{fig:dlrm_reduction} compares the memory capacity requirement between the baseline and TT-Rec (x-axis) across the 3, 5, and 7 largest embedding tables (y-axis). As illustrated in Figure~\ref{fig:dlrm_reduction}, the model size requirement also becomes significantly lower when TT-Rec trains the less number of the large embedding tables in the TT-Emb. format -- for TT-Emb. of 5 and 3, the overall model size is reduced by 48$\times$ and 4$\times$, respectively.
For Terabyte, TT-Rec achieves 2.6, 21.8, and 95.5$\times$ model size reduction for TT-Emb. of 3, 5 and 7, respectively.
%
This impressive memory capacity reduction unlocks industry-scale multi-GB/TB DLRMs that cannot be previously trained using commodity training accelerators, such as GPUs (40 GB of HBM2 in the latest NVIDIA A100 GPUs~\cite{nvidia:a100} or TPUs 16-32 GB of HBM~\cite{tpu}), to enjoy significantly higher throughput performance in state-of-the-art training accelerators.




\subsection{Model Accuracy with TT-Rec}
\label{sec:result-accuracy}


To achieve accuracy-neutral while still enjoying TT-Rec's memory reduction benefit, Figure~\ref{fig:accuracy}(a) and (b) compare the validation accuracy of the uncompressed baseline with that of TT-Rec for Kaggle and Terabyte, respectively. 
Figure~\ref{fig:accuracy}(a) shows that, when TT-Rec trains the largest 3, 5, and 7 embedding tables in the TT-Emb. format (x-axis), the optimal TT-rank to achieve a nearly accuracy-neural result varies, with the optimal rank of 8, 32, and 64, respectively.

Interestingly, for Terabyte, Figure~\ref{fig:accuracy}(b) shows that TT-Rec achieves higher validation accuracies (y-axis) across the board. 
As expected, with more embedding tables trained in the TT-Emb. format, TT-Rec brings significantly higher model size reduction at the expense of model accuracy degradation. Increasing the number of the large embedding tables trained in the TT-Emb. format from 3 to 7 improves the model size reduction from 2.6 to 95.5$\times$ while the validation accuracy degrades from 80.975\% to 80.682\%. Note, even though the validation accuracy is lowered, the model accuracy (TT-Emb. of 7) still outperforms the uncompressed baseline of 80.45\%. 


Using larger TT-ranks produces more accurate models at the expense of lower compression ratios. 
We notice that, although mathematically larger TT-ranks should produce more accurate approximations to the full tensor, increasing the rank does not always compensate the loss of accuracy. 
We believe that such accuracy loss is caused by the weight initialization distribution, as we describe next. 
\begin{figure}[t]
    \centering
    \vspace{-2.5mm}
    \includegraphics[width=0.9\columnwidth]{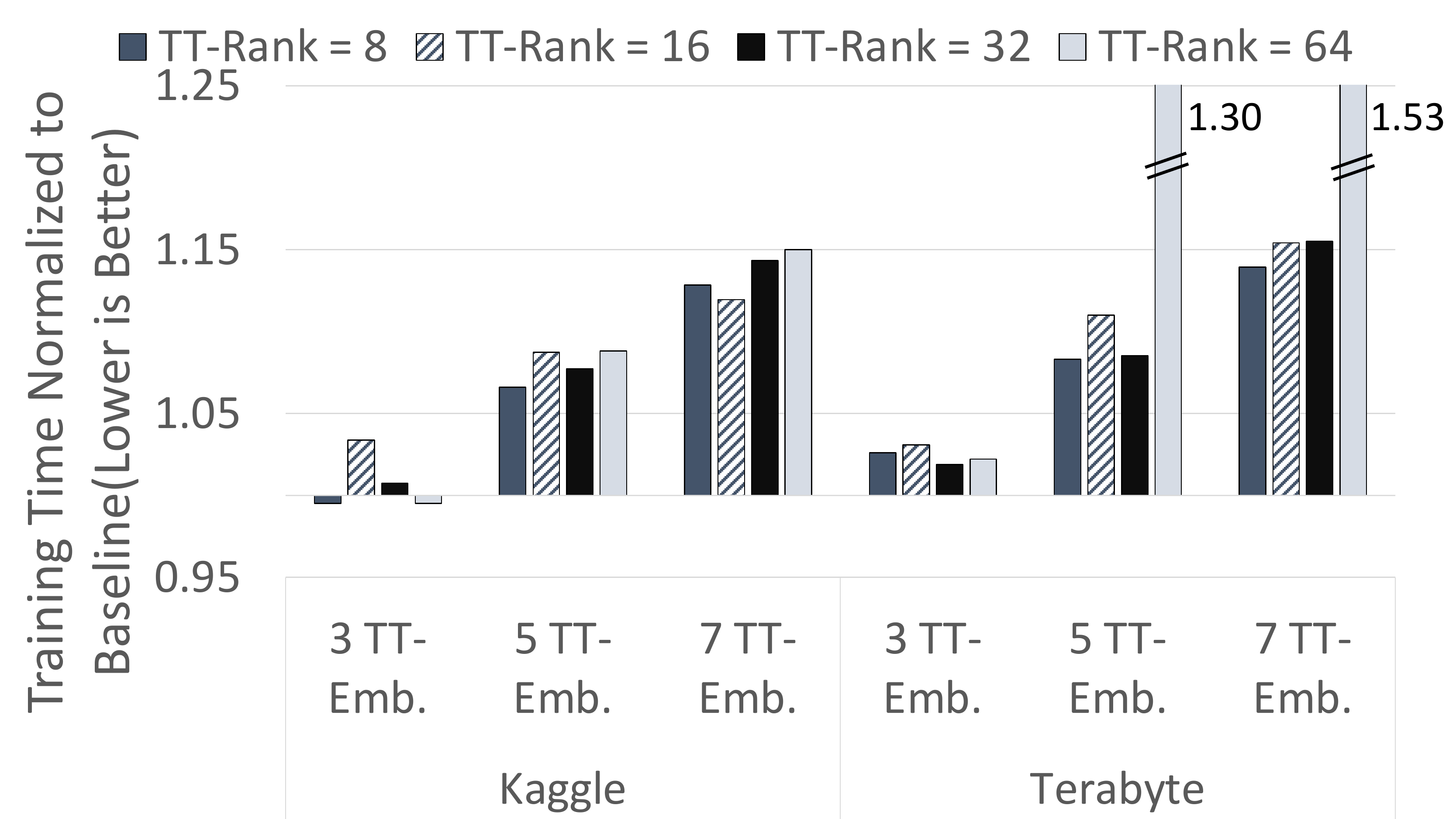}
    \caption{TT-Rec training time comparison across TT-ranks and the setting of TT-Emb.
    Baseline takes 12.14ms/iter on Kaggle, and 12.64ms/iter on Terabyte.}
    \vspace{-5mm}
    \label{fig:kaggle-dlrm-time}
\end{figure}

Figure~\ref{fig:accuracy}(c) presents the TT-Rec accuracy results using the different weight initialization strategies, described in \S\ref{sec:method-init}. Recall, the model accuracy difference between TT-Rec and the baseline strongly correlates with the distance between the distribution of the full matrix generated by the set of the tensor cores and the uniform distribution. Thus, the sampled Gaussian distribution is expected to outperform the other distributions as the full matrix generated by TT-Rec approximate $\mathcal{N}(0, \frac{1}{3n})$ the best. The accuracy results in Figure~\ref{fig:accuracy}(c) verify this expectation empirically.

\begin{figure}[t]
    \centering
    
    \includegraphics[width=.8\columnwidth]{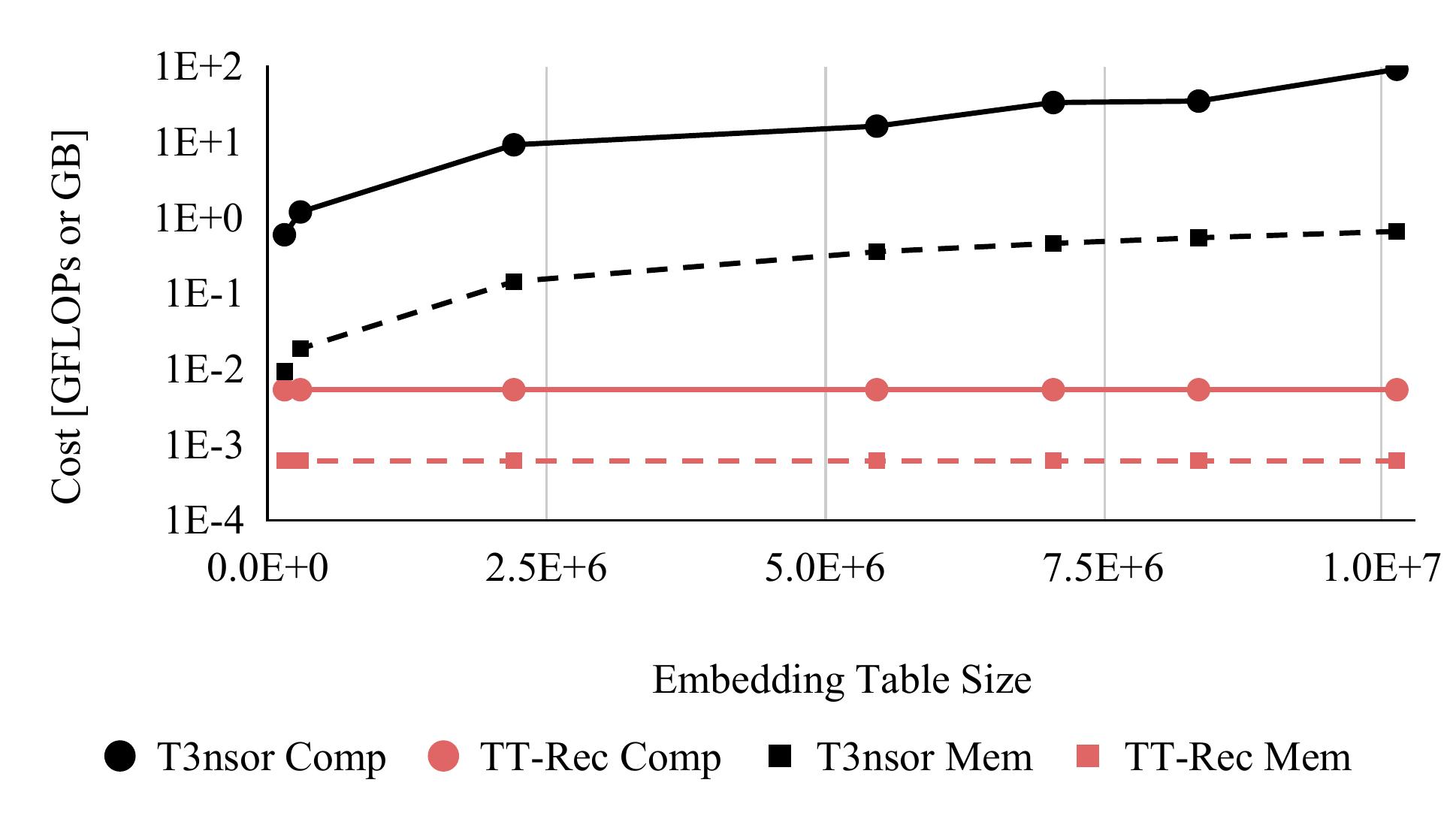}
    \vspace{-2.5mm}
    \caption{System resource requirement of T3nsor and TT-Rec.}
    \label{fig:tt-comp-cost}
    \vspace{-0.25cm}
\end{figure}

\subsection{Training Time Performance of TT-Rec}
\label{sec:training-time-ttrec}
To have a full picture of TT-Rec's potential, we present the training time performance of TT-Rec next.
Figure~\ref{fig:kaggle-dlrm-time} depicts the normalized training time of TT-Rec (y-axis), using TT-ranks of [8, 16, 32, 64] across the TT-Emb. settings of [3, 5, 7] (x-axis). 
Higher model size reduction ratios come with higher training time overheads. Increasing the number of the large embedding tables trained in the TT-Emb. format from 3 to 7 reduces the model sizes by 46.5 and 37.4$\times$ for Kaggle and Terabyte, respectively, while the training time using the optimal TT-rank increases by 12.5\% and 11.8\%, respectively. 
%
Depending on the importance of the three axes---\textit{memory capacity requirement}, \textit{model quality} and \textit{training time performance} for DLRM training---TT-Rec offers a flexible design space that can be navigated according to the desired optimization goal.

\subsection{TT-Embedding Kernel Implementation Efficiency}
\label{sec:implementation-efficiency}

\begin{figure}[t]
    \centering
    \vspace{-2.5mm}
    \includegraphics[width=0.8\columnwidth]{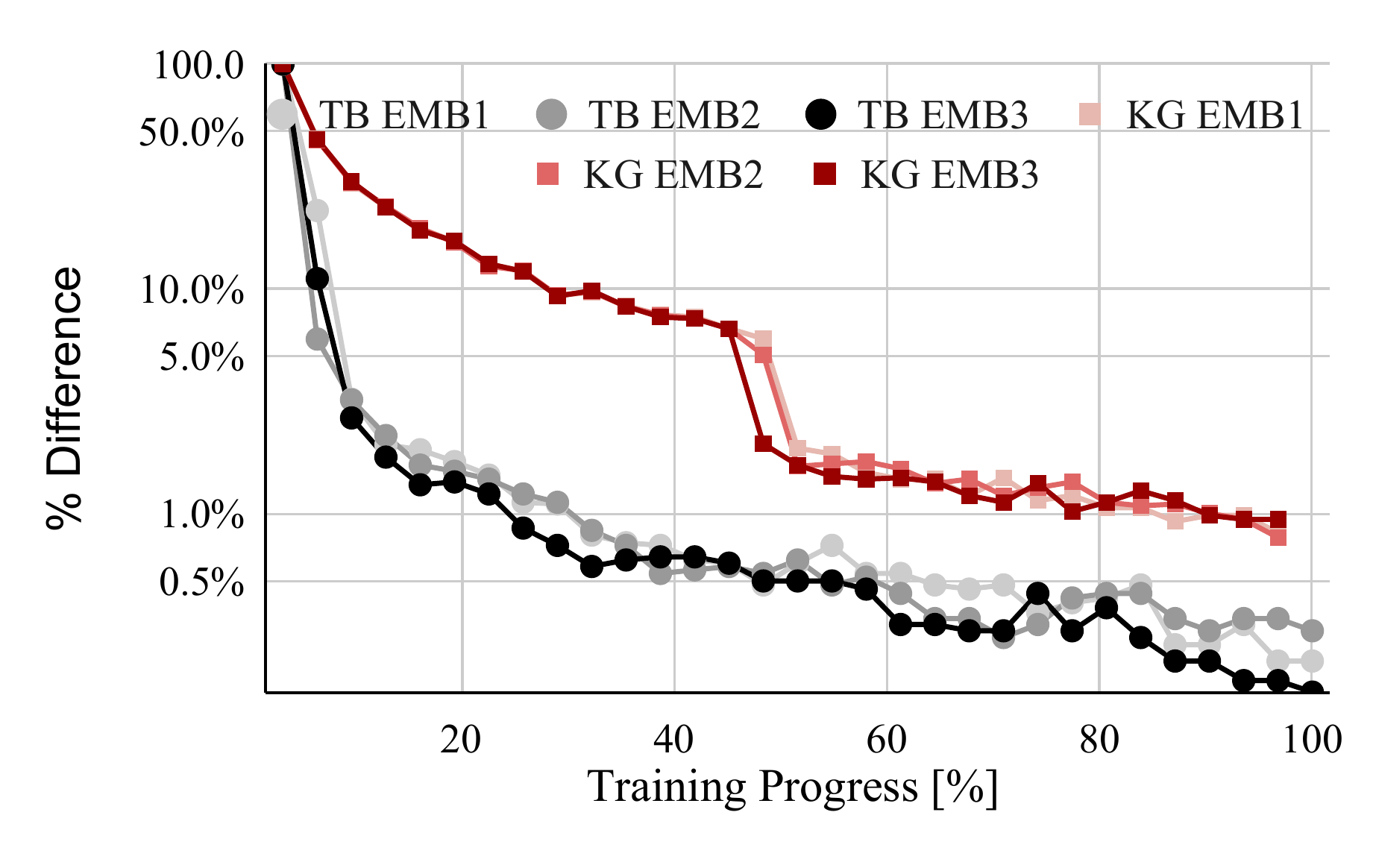}
    \caption{The set of frequently-accessed embedding rows over time stabilize at around 5\% and 50\% of the training run for Terabyte and Kaggle, respectively. 
    }
    \vspace{-8mm}
    \label{fig:freq-timeline}
\end{figure}

\begin{figure*}[t]
    \centering
    \includegraphics[width=0.47\textwidth]{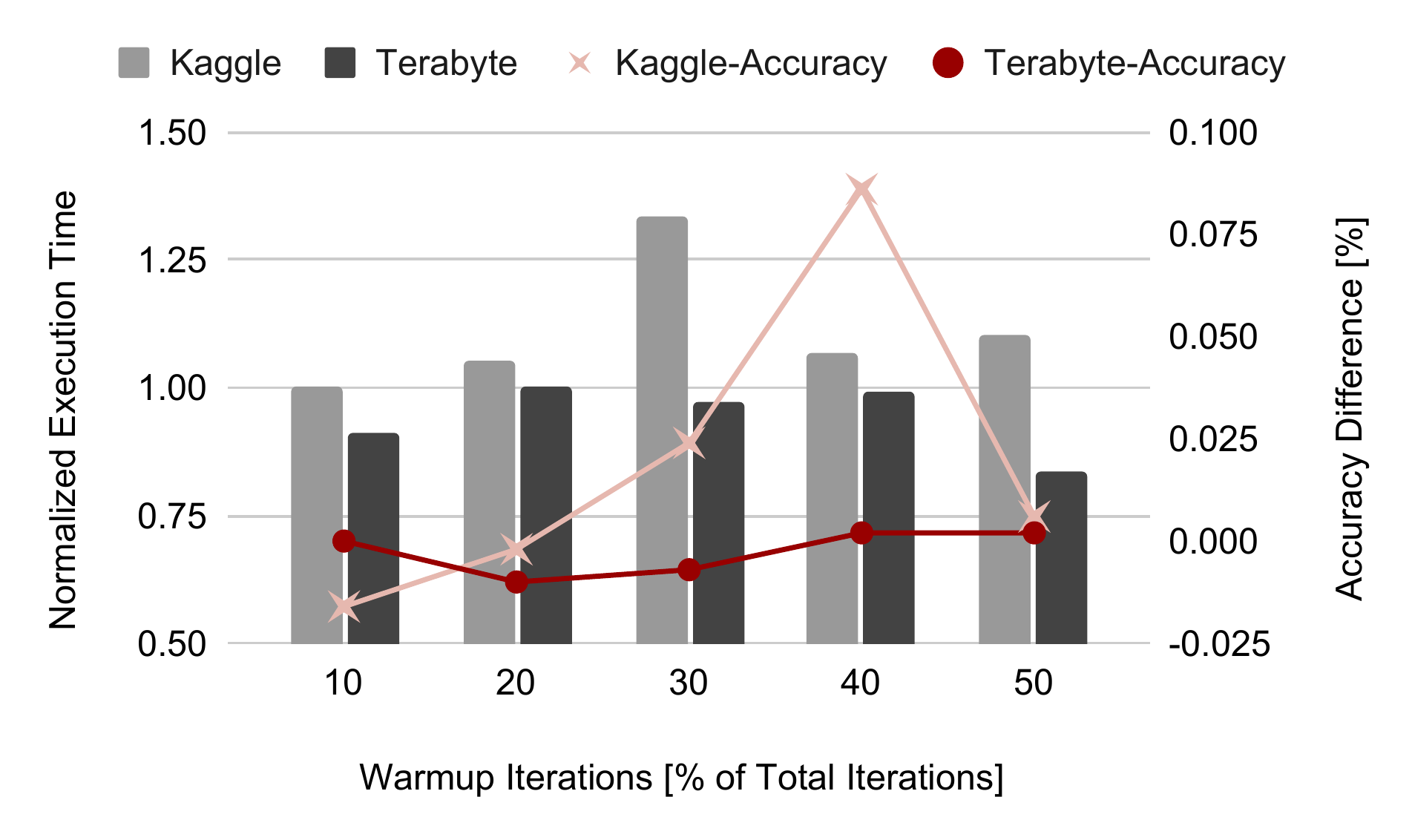}
    \includegraphics[width=0.47\textwidth]{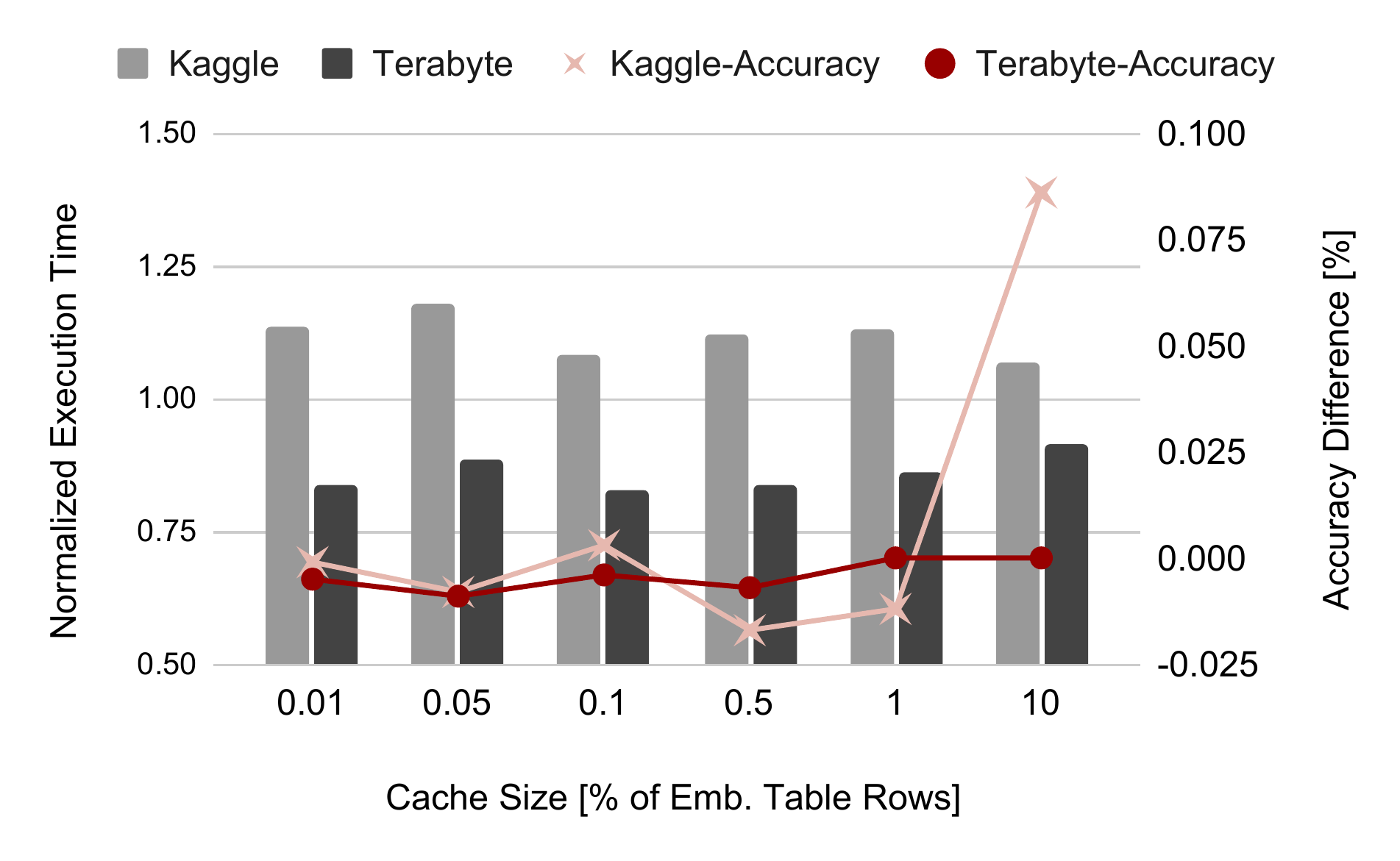}
    \vspace{-8mm}
    \caption{The impact of warm-up iterations on the TT-Rec training time and model accuracy (left). The model training time and accuracy over the cache sizes with respect to the embedding table (right). 
    }
    \label{fig:cache_warmup}
\end{figure*}

To quantify the efficiency of our TT-Rec implementation, we compare the performance of the TT-Embedding kernel with the baseline PyTorch EmbeddingBag operator~\cite{pytorch} and the state-of-the-art TT embedding library, called T3nsor~\cite{hrinchuk2020tensorized}, for word embedding of NLP use cases.
Figure~\ref{fig:tt-comp-cost} shows that our TT-Embedding implementation is order-of-magnitude more efficient than that of T3nsor, from the perspectives of compute (represented by circle) and memory (represented by square) requirement, over the number of rows in embedding table (x-axis). 
T3nsor decompresses embedding tables on the fly; thus, it requires the same amount of memory footprint during training as that of the PyTorch Embedding Bag operator. Our implementation achieves a memory footprint reduction of $\frac{\#Emb \space Rows}{Batch\space Size}$, 
yielding roughly 10,000$\times$ lower memory footprint requirement as compared to T3nsor and the PyTorch implementation. The overall training time of TT-Rec is on par with that of the baseline using the PyTorch Embedding Bag operator (Figure \ref{fig:kaggle-dlrm-time}; TT-Emb. of 3; Kaggle), and is 2.4$\times$ faster than T3nsor on average.




\subsection{Analysis for Feature Reuse Potential and TT-Rec Cache Performance}
\label{sec:cache_performance_analysis}
As described in \S\ref{sec:training-cache}, we introduce a cache structure in TT-Rec to capture the small subset of embedding rows with much higher degree of locality. Based on the available memory capacity on the training system, TT-Rec caches an uncompressed version of frequently-accessed rows to reduce the computation need. 

To illustrate the feature reuse potential in the context of TT-Rec, Figure~\ref{fig:freq-timeline} depicts the percentage of changes in the set of the most-frequently-accessed 10k embedding rows over the training run, for the three largest embedding tables (EMB1, EMB2, and EMB3). We count the cumulative row access frequencies every $3\%$ of training progress and measure the difference between each consecutive points (y-axis of Figure~\ref{fig:freq-timeline} in the log-scale). This difference is indicative of training phase stability: the lower the difference is, the more stable the set of frequently-accessed rows is. 




Figure~\ref{fig:cache_warmup}(a) shows the impact of warm-up iterations on the TT-Rec training time and model accuracy.
During the warm-up period, the cache structure is being filled up with most-frequently-accessed embedding vectors, using the aforementioned LFU replacement. 
Thus, the longer the warm-up period is, the better the TT-Rec cache captures the most-frequently-accessed embedding vectors and the higher the cache hit rate is for the remaining training iterations. 

For Kaggle, as the warm-up iterations increase from 10\% to 30\% of the total training iterations, the total  training time increases by 33\%. This is because the training time speedup from the slightly higher cache hit rate (from the warm-up period) is not sufficient to compensate the warm-up time overhead. As warm-up continues to increase beyond 30\% of the training, we start seeing the training time overhead to decrease. This happens as the hit rate improves with more warm-up iterations. 
In contrast, with Terabyte (the large industry-scale data set), TT-Rec cache can effectively and consistently reduce the model training time across the different warm-up iterations. It improves the end-to-end model training time by up-to 19\% with negligible accuracy impact. Overall, with the cache support, TT-Rec receives additional accuracy improvement of 0.09\% on Kaggle and 0.02\% on Terabyte.

Another important design parameter for TT-Rec is the size of the LFU cache. 
Figure~\ref{fig:cache_warmup}(b) shows the corresponding model training time and accuracy over the cache sizes ranging from 0.01\% to 10\% of the respective embedding tables. 
For Kaggle and Terabyte, devoting 0.01\% worth of the embedding table memory requirement is sufficient. 



\begin{figure}
    \vspace{-2.5mm}
    \centering
    \includegraphics[width=0.9\columnwidth]{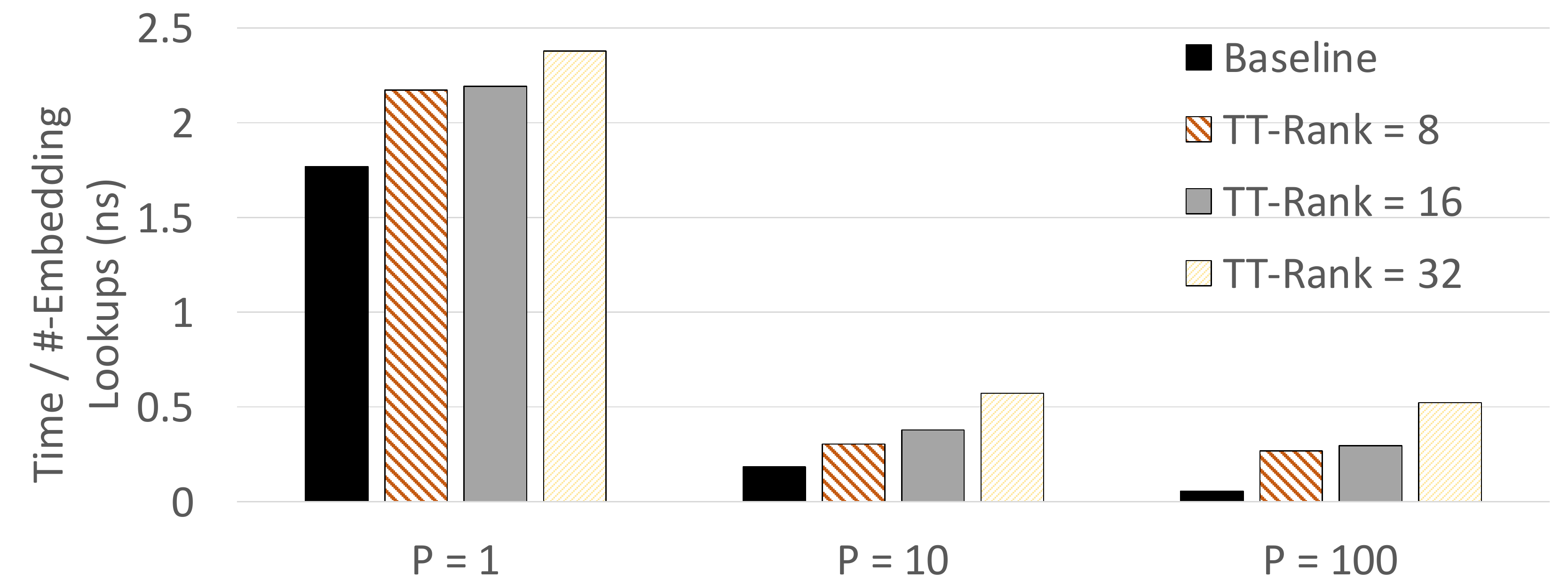}
    \vspace{-2.5mm}
    \caption{Performance of TT-Rec for MLPerf-DLRM (P of 1) and embedding-dominated DLRMs (P of 10 and 100).}
    \vspace{-0.25cm}
    \label{fig:ttrec-cache-b}
\end{figure}

\begin{figure}
    \centering
    \includegraphics[width=0.98\columnwidth]{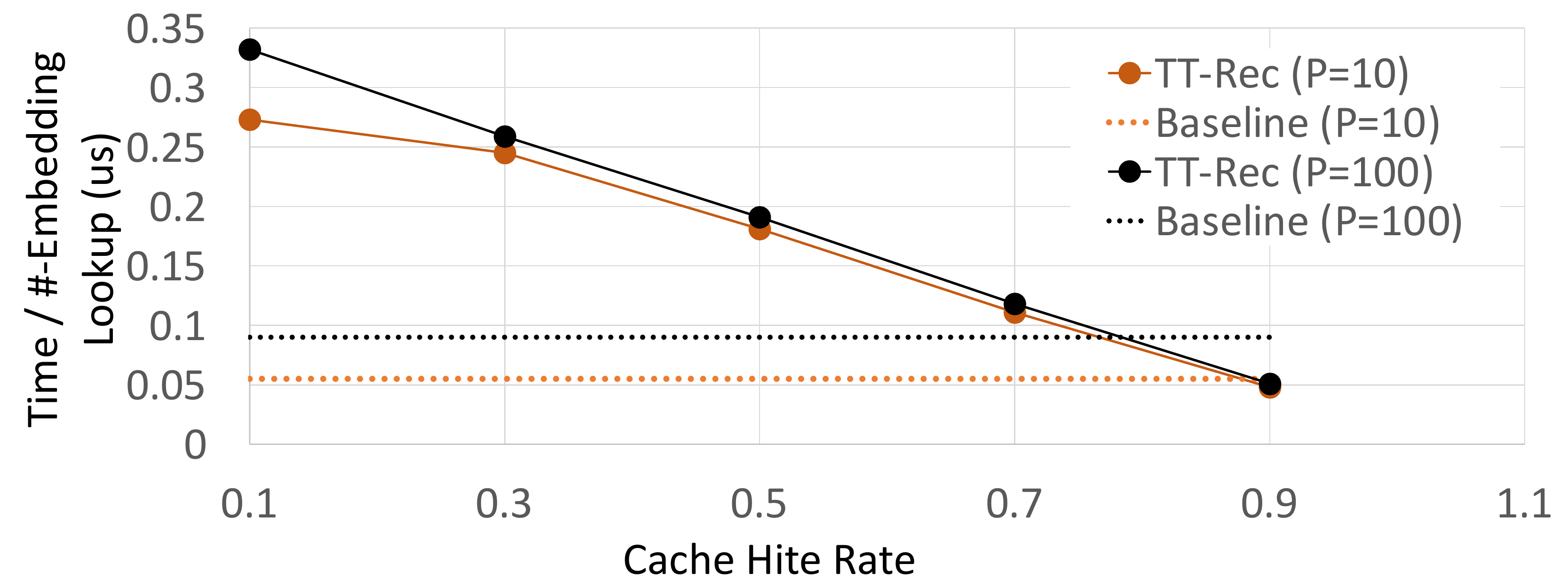}
    \vspace{-2.5mm}
    \caption{Performance comparison for TT-Rec's EmbeddingBag kernel and the baseline.}
    \vspace{-0.25cm}
    \label{fig:ttrec-cache-a}
\end{figure}

\subsection{TT-Rec Performance with Large Pooling Factors}
\label{sec:tt-rec-pooling}
To understand Rec-TT training performance for other categories of DLRMs that are embedding-dominated~\cite{Gupta2019,gupta:isca2020,ke:isca2020},
we develop a suite of microbenchmarks to synthetically generate embedding vectors with a polling factor (P) of 10 and 100. The pooling factor is defined as
the average number of embedding lookups required per training sample.
The larger P, the more embedding vectors are looked up and gathered, and the embedding operations
are more expensive. Both Kaggle and Terabyte correspond to P = 1. 

Figure~\ref{fig:ttrec-cache-b} compares the performance of
the non-cached TT-Rec kernel with PyTorch EmbeddingBag for P =[1, 10, 100]
across various TT-ranks. As seen in Figure~\ref{fig:ttrec-cache-b}, 
the performance per training sample is better as P increases.
This is because as the number of embedding lookups increase, the 
overhead of EmbeddingBag is amortized, in both the baseline and TT-Rec 
cases. Furthermore, the performance gap between the non-cached TT-Rec kernel 
and EmbeddingBag increases as P increases.
This is because there exists higher reuses of embedding vectors when P is larger
and EmbeddingBag benefits from such reuse.

To reduce the execution time of TT-Rec as P increases,
Figure~\ref{fig:ttrec-cache-a} compares the performance of TT-Rec
with caching enabled with EmbeddingBag. To quantify the  
timing performance per training sample, we synthetically generate data samples to control the cache hit rate.
As expected, as the cache hit rate increases (the x-axis), the performance of TT-Rec improves 
and eventually outperforms the baseline EmbeddingBag when the 
cache hit rate reaches 90\%.

\label{results}


\section{Related Work}
\label{related-work}
\textbf{General Model Compression Techniques:}
Among the many compression techniques for training, the commonly-used ones include magnitude pruning~\cite{zhu2017prune}, variational dropout~\cite{kingma2013auto}, and $l_0$ regularization~\cite{louizos2017learning}.
Other efforts propose to impose structured sparsity in model weights upfront~\cite{child2019generating,gray2017gpu}.
Such approaches can significantly reduce both training and inference cost, but have not been proven as effective solutions for deep learning recommendation models. 
Furthermore, while these methods are generally applicable, the prior works are fundamentally different from our proposed TT-decomposition approach.

\textbf{Embedding Table Compression Techniques:}
For embedding, the seminal work by Weinberger et al. examines feature hashing, that allows multiple elements to map to the same embedding vector; thus, it reduces the embedding space~\cite{weinberger2009feature}. However, hash collisions yield significant accuracy losses. 
For example, Zhao et al. observed an intolerable degree of accuracy loss if hashing were applied to TB-scale recommendation models~\cite{zhao2020distributed}.
Guan et al. propose a 4-bit  quantization scheme to compress  pre-trained models for inference~\cite{guan2019posttraining}. This design is feasible for recommendation inference although, quantization for training is more challenging and often comes with accuracy tradeoff.
Other works also explore the potential of low-rank approximation on the embedding tables~\cite{hrinchuk2020tensorized, ghaemmaghami2020training} but experience critical accuracy degradation. And none provide a computationally-efficient interface for industry-scale DLRMs.

\textbf{Tensorization:}
\label{sec:related:tensors}
Tensor methods have been extensively studied for compressing DNNs. One of the most common method is the Tucker factorization~\cite{cohen2016expressive}, which can generate high-quality DNNs when compressing fully-connected layers. 
Tensor Train (TT) and Tensor Ring (TR) decomposition techniques have been recently studied in the context of DNNs~\cite{Wang2018,Hawkins2019}.
But previous work has explored the accuracy trade-off for fully-connected and convolution layers only. In particular, TT decomposition offers a structural way to compress DNNs and thus is capable of preserving the DNN weights.
TR can preserve the weights with moderately lower compression ratios than that of TT~\cite{Wang2018}.
%
Despite interest in TT-based methods, to the best of our knowledge, ours is the first to consider them in the context of DLRMs.
In our analysis, we comprehensively study the design space of how memory size reduction, model quality, and training time overheads trade-off.
Finally, we also present an efficient implementation for TT-Rec, which we will release as open source upon the acceptance of this work. 



\section{Conclusion}
The spirit of TT-Rec is to use principled, parameterized algorithmic methods to help control the explosive demands on computational infrastructure.
This strategy complements innovations in the infrastructure itself.
TT-Rec specifically attacks the considerable memory requirements of embedding layers of modern recommendation models, whose memory requirements in industrial applications at scale can require hundreds of GBs to TBs of memory.
TT-Rec replaces otherwise large embedding tables with a sequence of matrix products, reducing the total model memory capacity requirement by 112 times with only a small amount of 13.9\% training time overhead while maintaining the same model accuracy as the baseline.
Such significant memory capacity reductions can be achieved with a relatively small increase in training time through clever caching strategies, making online recommendation model training more practical. 

\bibliography{tensor-train} 
\bibliographystyle{mlsys2020}

\clearpage
\begingroup
\let\clearpage\relax 
\onecolumn 

\begin{center}
    \textbf{APPENDIX}
\end{center}
\appendix

\section{Algorithm Details}
\label{sec:appendix_algorithm}
Algorithm~\ref{alg:tt-emb} and Algorithm~\ref{alg:tt-emb-back} below shows
the details of the forward and back-propagation algorithm
for TT-Rec embedding tables.
Algorithm~\ref{alg:sampled_gaussian} shows how the TT-cores are initialized using our proposed Sampled Gaussian method.

\begin{algorithm2e}[ht]
    \caption{Forward prop. of TT-Embedding}
    \label{alg:tt-emb}
    \SetAlgoLined
    \DontPrintSemicolon
    \While{$S\_idx < offsets[m]$}{   
        $E\_idx = min(offsets[S\_idx + B], offsets[m])$\;
        \For{$k = S\_idx$ to $E\_idx$}{
            $idx[j][k] = i_j^k$ in Eqn(~\ref{eqn:tt-embedding-bag})\;
            $a[k] = \&\mathcal{G}_1[idx[1][k]][0]$\;
            $b[k] = \&\mathcal{G}_0[idx[0][k]][0]$\;
            $c[k] = \&tr_0[k][0]$\;
            $a[k+B] = \&\mathcal{G}_2[idx[2][k]][0]$\;
            $b[k+B] = \&tr_0[k][0]$\;
            $c[k+B] = \&tr_1[k][0]$\;
        }
        \For{j =  0 to d-2}{
            \small{$\rhd$ Batched GEMM kernel calls}\;
            $c[jB:(j+1)B] = $
            $a[jB:(j+1)B] * b[jB:(j+1)B]$\;
        }
        \small{$\rhd$ Reduce embedding rows to output}\;
        $output[S\_idx:E\_idx] = \sum_{j = offsets[i]}^{offsets[i+1]} c[j]$\;
        $S\_idx = E\_idx$\;
    }
\end{algorithm2e}

\begin{algorithm2e}[ht]
    \caption{Backward prop. of TT-Rec Embeddings}
    \label{alg:tt-emb-back}
    \SetAlgoLined
    \DontPrintSemicolon
    \While{$S\_idx < offsets[m]$}{   
        $E\_idx = min(offsets[S\_idx + B], offsets[m])$\;
        Recompute for $tr_i$'s as in Algorithm~\ref{alg:tt-emb}\;
        \For{$k = S\_idx$ to $E\_idx$}{
            $idx[j][k] = i_j^k$ in Eqn(~\ref{eqn:tt-embedding-bag})\;
            $a0[k] = \&\mathcal{G}_0[idx[0][k]][0]$\;
            $b0[k] = \&tr_0[k][0]$\;
            $c0[k] = \&tr\_\mathcal{G}_1[k][0]$\;
            $a1[k] = \&tr_0[k][0]$\;
            $b1[k] = \&\mathcal{G}_1[idx[1][k]][0]$\;
            $c1[k] = \&tr\_\mathcal{G}_0[k][0]$\;
            $a0[k+B] = \& tr_0[k][0]$\;
            $b0[k+B] = \& dx$\;
            $c0[k+B] = \&tr\_\mathcal{G}_2[k][0]$\;
            $a1[k+B] = \& dx$\;
            $b1[k+B] = \&\mathcal{G}_2[idx[2][k]][0]$\;
            $c1[k+B] = \& tr_0[k][0]$\;
        }
        \For{j =  d-2 to 0}{
            \small{$\rhd$ Batched GEMM calls to compute $\partial\mathcal{G}_j$}\;
            $c0[jB:(j+1)B] = $
            $a0[jB:(j+1)B] * b0[jB:(j+1)B]$\;
            \small{$\rhd$ Batched GEMM calls to compute $\partial x$}\;
            $c1[jB:(j+1)B] = $
            $a1[jB:(j+1)B] * b1[jB:(j+1)B]$\;
            $\partial \mathcal{G}_j[idx[k]] += tr\_\mathcal{G}_j[k]$\;
        }
        $start\_idx = end\_idx$\;
    }
\end{algorithm2e}

\begin{algorithm2e}[hbt!]
    \DontPrintSemicolon
    \For{d = 0 to tt-dim}{
        $\mathcal{G}_d$ = random.normal(0,1) \;
        \For{each entry $\mathcal{G}_d(i, j, k, l)$ in $\mathcal{G}_d$}{
            \While{$\mathcal{G}_d(i, j, k, l) \le 2$}{
                $\mathcal{G}_d(i, j, k, l)$ = random.normal(0,1)\;
            }
        }
        $\mathcal{G}_d /= (\sqrt{1/3n})^{1/d}$\;
    }
    \caption{Sampled Gaussian initialization.}
    \label{alg:sampled_gaussian}
\end{algorithm2e}
\comment{
\newpage
\section{Related Work}

We describe the algorithm to initialize tensor train cores in a way that the product of them forms a semi-orthogonal matrix. We will first review the related work suggesting orthogonal initialization could be benificial to deep neural network training, and show how we can achieve that in a tensor train decomposed network.

\subsection{Provable Benefit of Orthogonal Initialization in Optimizing Deep Linear Networks}

This paper proved that for feedforward networks, if the weight matrices are independently from a uniform distribution over scaled orthogonal matrices, the network converges faster than Gaussian.
Let $X \in \mathbb{R}^{n \times d_x}$ be the input, and $W_1 \in \mathbb{R}{dx \times m}$, $W_i \in \mathbb{R}{m \times m}$ be the weight matrices satisfying
\begin{align*}
    & W_1^T(0) W_1(0) = m I_{d_x},\\
    & W_i^T(0) W_i(0) = W_i(0) W_i^T(0) = m I_m, i \ge 2
\end{align*}
The normalization factor $\alpha$ is set as $\alpha = \frac{1}{\sqrt{m^{L-1}d_y}}$ to ensure $\mathbb{E}[\|f(x; W_L(0), ..., W_1(0)\|^2] = \|x\|^2$ for any $x$.

Let $W^*\in arg min _{W} \|WX - Y\|_F$ and $l^* = \frac{1}{2}\|W^*X - Y\|_F^2$ be the optimal solution and the corresponding residual to the network. Denote $r = rank(X)$, $\kappa = \frac{\lambda_{max}(X^TX)}{\lambda_r(X^TX)}$, and $\Tilde{r} = \frac{\|X\|^2_F}{\|X\|^2}$, the main theorem of this paper is
\begin{theorem}
Suppose 
\[
m \ge C\Tilde{r}\kappa^2(d_y(1+\|W^*\|^2) + \log(r/\delta))\text{ and }m \ge d_x,
\]
for some $\delta\in (0,1)$ and a sufficiently large universal constant $C > 0$. Set the learning rate $\mathbb{}ta \le \frac{d_y}{2L\|X\|^2}$. Then with probability at least $1-\eta$ over the random initialization, we have
\begin{align}
    & l(0) - l^* \le O(1 + \frac{\log(r/\delta)}{d_y} + \|W^*\|^2) \|X\|_F^2, \\
    & l(t) - l^* \le (1- \frac{1}{2}\eta L \lambda_r(X^TX)/d_y)^t(l(0) - l^*), t = 0, 1, 2,\cdots
\end{align}
where l(t) is the objective value at iteration t.
\end{theorem}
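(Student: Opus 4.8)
The plan is to follow the end-to-end map $W(t)=\alpha\,W_L(t)\cdots W_1(t)$ and to argue that gradient descent on the factors $W_i$ drives the induced loss $l(t)=\tfrac12\|W(t)X-Y\|_F^2$ down geometrically, as long as the intermediate products stay well-conditioned. First I would establish the initial-loss bound. Because the initialization satisfies $W_1^{T}(0)W_1(0)=mI_{d_x}$ and $W_i^{T}(0)W_i(0)=W_i(0)W_i^{T}(0)=mI_m$, together with the normalization $\alpha=1/\sqrt{m^{L-1}d_y}$, the map $W(0)$ is norm-preserving in expectation, $\mathbb{E}\|f(x;W(0))\|^2=\|x\|^2$. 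Since $l(0)-l^*=\tfrac12\|(W(0)-W^*)X\|_F^2\le\tfrac12\|W(0)-W^*\|^2\|X\|_F^2$ and $\|W(0)-W^*\|^2\le 2\|W(0)\|^2+2\|W^*\|^2$, a concentration estimate for $\|W(0)\|$ over the random orthogonal factors, valid once $m\gtrsim d_y\log(r/\delta)$, yields $l(0)-l^*\le O\!\left(1+\tfrac{\log(r/\delta)}{d_y}+\|W^*\|^2\right)\|X\|_F^2$ with probability $1-\delta$.

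The linear rate rests on a gradient-dominance (Polyak--\L{}ojasiewicz) inequality. Writing $W_{j:i}=W_j\cdots W_i$ and differentiating, $\partial l/\partial W_i=\alpha\,W_{L:i+1}^{T}(WX-Y)X^{T}W_{i-1:1}^{T}$, I would lower-bound $\sum_{i=1}^{L}\|\partial l/\partial W_i\|_F^2$. When every $W_{L:i+1}$ and $W_{i-1:1}$ keeps its initialization-scale smallest nonzero singular value ($m^{(L-i)/2}$ and $m^{(i-1)/2}$ respectively), each layer contributes a factor $\alpha^2 m^{L-1}=1/d_y$, and summing over the $L$ layers together with a Courant--Fischer bound on the residual gives $\sum_i\|\partial l/\partial W_i\|_F^2\ge c\,L\,\lambda_r(X^TX)\,(l-l^*)/d_y$. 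Here orthogonal initialization is decisive: all nonzero singular values of each $W_{j:i}(0)$ are exactly $m^{(j-i+1)/2}$, so no layer degenerates and the effective conditioning does not deteriorate with depth. Combined with a smoothness/descent-lemma estimate whose error term the step-size condition $\eta\le d_y/(2L\|X\|^2)$ controls, one gradient-descent step yields $l(t+1)-l^*\le\bigl(1-\tfrac12\eta L\lambda_r(X^TX)/d_y\bigr)(l(t)-l^*)$.

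Globalizing this local estimate is the delicate part. I would run an induction on $t$ with two coupled invariants: (i) the geometric decrease $l(t)-l^*\le(1-\rho)^t(l(0)-l^*)$ with $\rho=\tfrac12\eta L\lambda_r(X^TX)/d_y$, and (ii) a stay-near-initialization bound on $\|W_i(t)-W_i(0)\|$ for every layer. Invariant (ii) supplies the singular-value control that the gradient-dominance step needs, while invariant (i) makes the per-step displacement---of order $\eta$ times the gradient norm, hence $O(\eta\sqrt{l(t)-l^*})$---summable, so the cumulative movement is bounded by a convergent geometric series. The width requirement $m\ge C\tilde r\kappa^2\!\left(d_y(1+\|W^*\|^2)+\log(r/\delta)\right)$ is exactly what forces this total displacement to stay small relative to the initialization scale $\sqrt m$, thereby closing the induction.

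The main obstacle is the circularity between the two invariants: the gradient-dominance lower bound (and thus the contraction) presupposes that the weights remain near initialization, while the proof that they remain near initialization presupposes the loss is already contracting geometrically. Resolving this requires a simultaneous induction---assuming both (i) and (ii) through step $t$, deriving the one-step contraction, bounding the incremental displacement, and resumming to re-establish (ii) at step $t+1$---with the constants in the width bound chosen to absorb the stable rank $\tilde r=\|X\|_F^2/\|X\|^2$ and the conditioning $\kappa$ of $X$. This is where the quantitative work concentrates; the orthogonal initialization is precisely what removes the depth dependence that a Gaussian initialization would otherwise inject into the required width.
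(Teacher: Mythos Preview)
The paper does not prove this theorem. The statement appears only inside a commented-out appendix section (wrapped in a \verb|\comment{...}| block that is never typeset), where the authors are merely summarizing the main result of an external reference on orthogonal initialization for deep linear networks. There is no proof in the paper to compare your proposal against.

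That said, your outline is a faithful sketch of how the original source establishes the result: bound $l(0)-l^*$ via concentration of the orthogonally-initialized end-to-end map, derive a Polyak--\L{}ojasiewicz inequality by lower-bounding $\sum_i\|\partial l/\partial W_i\|_F^2$ using the exact singular-value structure of products of scaled orthogonal matrices, couple it with a descent lemma under the step-size restriction, and close with a simultaneous induction on geometric decay and bounded displacement from initialization. Your identification of the circularity between the two invariants, and of the width condition as the mechanism that breaks it, is the correct diagnosis. If anything, the one place that would need more care in a full write-up is the smoothness side: you invoke a descent-lemma estimate but do not spell out the upper bound on the effective Lipschitz constant of the gradient along the trajectory, which also relies on the near-initialization invariant and on the same $m^{(L-1)}$ scaling that gives the PL constant. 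But as a proposal this is sound; there is simply nothing in the present paper to benchmark it against.
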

Overall, it describes that when the weight matrices are initialized to be orthogonal and when $m$ is sufficiently large, with properly chosen learning rate, (i) the initialized point is close to the optimal solution, and (ii) the loss at each step is bounded and decreases exponentially. 

\subsection{Sample Matrix Independently from a Uniform Distribution over Orthogonal Matrices}
\begin{claim}
Let $A \in \mathbb{R}^{n\times m}$ be a matrix whose entries are i.i.d from a normal distribution, and let $QR = A$ be its QR decomposition. The resulting distribution of $Q$ is Haar measure on orthogonal matrices (invariant under symmetric operations).
\end{claim}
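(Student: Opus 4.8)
The plan is to exploit the defining symmetry of the Gaussian ensemble---its invariance under left multiplication by orthogonal matrices---together with the equivariance of the QR map, so that the law of $Q$ is forced to be left-invariant and hence must coincide with Haar measure. The whole argument is structural rather than computational: once the two invariance properties are in place, uniqueness of Haar measure does the rest.

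First I would record the rotational invariance of the source distribution. Writing the joint density of the entries as proportional to $\exp(-\tfrac{1}{2}\|A\|_F^2) = \exp(-\tfrac{1}{2}\operatorname{tr}(A^\top A))$, for any fixed orthogonal matrix $O$ one has $\|OA\|_F^2 = \operatorname{tr}(A^\top O^\top O A) = \|A\|_F^2$, and the Jacobian of the linear map $A \mapsto OA$ has absolute value $1$. Hence $OA \stackrel{d}{=} A$ for every fixed $O \in O(n)$.

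Next I would pin down uniqueness of the factorization. Since a Gaussian matrix has full rank almost surely, and the diagonal entries of $R$ are a.s. nonzero, I would adopt the standard convention that $R$ has strictly positive diagonal entries; under this convention the QR factorization is unique, and the map $A \mapsto Q(A)$ is well-defined and measurable on the full-rank set. The key structural observation is equivariance: for fixed orthogonal $O$, writing $OA = (OQ)R$ exhibits $OQ$ as orthogonal and leaves $R$ upper triangular with positive diagonal, so by uniqueness $Q(OA) = O\,Q(A)$.

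Combining the two facts, for any fixed $O$ we obtain $Q(A) \stackrel{d}{=} Q(OA) = O\,Q(A)$, so the pushforward law of $Q$ is invariant under left multiplication by every element of $O(n)$. A left-invariant Borel probability measure on the compact group $O(n)$ is unique and equals Haar measure, which finishes the argument; one should additionally verify that the support is all of $O(n)$ and not merely $SO(n)$, by noting that $\det Q$ and $\det A$ share a sign (because $\prod_i R_{ii} > 0$) and that $\det A$ is symmetric about zero. The main obstacle I anticipate is not the symmetry argument itself but the careful bookkeeping around the sign convention and the measurability of the Gram--Schmidt map on the full-rank set, together with confirming that $Q$ sweeps out both connected components of $O(n)$.
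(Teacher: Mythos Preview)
Your proposal is correct and follows essentially the same approach as the paper: the paper also establishes equivariance of the Gram--Schmidt/QR map under left multiplication by a fixed orthogonal matrix (by explicit computation on the first two columns) and then concludes left-invariance of the law of $Q$. Your treatment is more careful about the positive-diagonal sign convention on $R$, the uniqueness-of-Haar step, and the verification that both connected components of $O(n)$ are reached, but the core strategy is identical.
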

\begin{proof}
Suppose $x_1, x_2$ are the first 2 columns of matrix $X$, Gram Schmidt gives an orthogonal matrix with columns $\frac{x_1}{\|x_1\|}$ and $\frac{x_2-<x_1, x_2>\frac{x_1}{\|x_1\|}}{\|numerator\|}$. Assume the orthogonal matrix has distribution $X_{GS}$.

If $O$ is an orthogonal matrix, $OX$ has columns $Ox_1, Ox_2$. Applying Gram-Schmidt to $OX$,  we get 2 orthogonal vectors
\begin{align*}
    &\frac{Ox_1}{\|Ox_1\|} = \frac{Ox_1}{\|x_1\|}\\
    &\frac{Ox_2-<Ox_1, Ox_2>\frac{Ox_1}{\|Ox_1\|}}{\|numerator\|} = \frac{Ox_2-<x_1, x_2>\frac{Ox_1}{\|x_1\|}}{\|numerator\|}
\end{align*}
Therefore the distribution of $(OX)_{GS}$ is $OX_{GS} = X_{GS}$. This shows that the distribution $X_GS$ is invariant under orthogonal matrices.

\end{proof}

\section{Initialize Tensor Cores}
\subsection{Summary}
Let $W\in \mathbb{R}^{m \times n}$ be a matrix, and we apply 3-d TT decomposition on $W$ as follows
\[
\mathcal{W}((i_1, j_1), (i_2, j_2), (i_3, j_3)) = \mathcal{G}_1(:, i_1, j_1,:)\mathcal{G}_2(:, i_2, j_2, :) \mathcal{G}_d (:, i_3, j_3,:),
\]
where $\mathcal{G}_1\in \mathbb{R}^{1 \times I_1 \times J_1 \times R}$, $\mathcal{G}_2\in \mathbb{R}^{R \times I_2 \times J_2 \times R}$, and $\mathcal{G}_3\in \mathbb{R}^{R \times I_3 \times J_3 \times 1}$.
\begin{table}[h]
\centering
\begin{tabular}{|r|r|l|l|l|}
\hline
\multicolumn{2}{|l|}{Embedding Table size} & $\mathcal{G}_1$ shape   & $\mathcal{G}_2$ shape   & $\mathcal{G}_3$ shape   \\ \hline
10131227               & 16              & (1, 200, 2, $R$) & ($R$, 220, 2, $R$) & ($R$, 250, 4, 1) \\ \hline
8351593                & 16              & (1, 200, 2, $R$) & ($R$, 200, 2, $R$) & ($R$, 209, 4, 1) \\ \hline
7046547                & 16              & (1, 200, 2, $R$) & ($R$, 200, 2, $R$) & ($R$, 200, 4, 1) \\ \hline
5461306                & 16              & (1, 166, 2, $R$) & ($R$, 175, 2, $R$) & ($R$, 188, 4, 1) \\ \hline
2202608                & 16              & (1, 125, 2, $R$) & ($R$, 130, 2, $R$) & ($R$, 136, 4, 1) \\ \hline
286181                 & 16              & (1, 53, 2, $R$)  & ($R$, 72, 2, $R$)  & ($R$, 75, 4, 1)  \\ \hline
142572                 & 16              & (1, 50, 2, $R$)  & ($R$, 52, 2, $R$)  & ($R$, 55, 4, 1) \\ \hline
\end{tabular}
\caption{Size of 5 largest embedding tables and their TT decomposition in DLRM traing with Kaggle}
\end{table}

We want to initialize the tensor cores so that $W$ is uniformly random from the space of scaled semi-orthogonal matrix, i.e $W^T W = \alpha I$. The algorithm to initialize for each TT-core is as follows
\begin{enumerate}
    \item Generate $RJ_3$ orthogonal vectors of length $I_3$ $v_1, v_2, \dots, v_{RJ_3}$, set\\ $\mathcal{G}_3(r, :, j, 0)= v_{rJ_3+j}$
    \item Generate $RJ_2$ orthogonal vectors of length $RI_2$ $w_1, w_2, \dots, w_{RJ_2}$, set\\ $\mathcal{G}_2(r, : , j, :) = w_{rJ_2+ j}.reshape(I_2, R)$
    \item Generate $J_1$ orthogonal vectors of length $RI_1$ $u_1, u_2, \dots, u_{J_1}$, set\\ $\mathcal{G}_1(0, : , j, :) = w_{j}.reshape(I_1, R)$
\end{enumerate}
In step 2 and 3, the embedding table size ensures that $RJ_\alpha < RI_\alpha$ and therefore it is feasible to fill the tensor cores with $RJ_\alpha$ orthogonal vectors. However for $\mathcal{G}_3$, it is not guaranteed that $RJ_3 \ge I_3$ and so we have to choose tensor ranks carefully. Another possibility is to factorize the dimensions in a different way so $I_3$ can be larger. Yet it is unclear if this would affect TT accuracy.

\subsection{Proof}
Suppose $i = \sum_{\alpha = 1}^3 i_\alpha I_\alpha$, $j = \sum_{\alpha = 1}^3 j_\alpha J_\alpha$ be two indices in $W$, we compute each entry in $W^TW$
\begin{align}
    &W^T W(i,j) = \sum_{k = 1}^n W^T(i, k) W(k, j) = \sum_{k=1}^n W(k,i) W(k,j)\\
    & = \sum_{k = 1}^n \mathcal{G}_1(:, k_1, i_1,:)\mathcal{G}_2(:, k_2, i_2, :) \mathcal{G}_3 (:, k_3, i_3,:) (\mathcal{G}_1(:, k_1, j_1,:)\mathcal{G}_2(:, k_2, j_2, :) \mathcal{G}_3 (:, k_3, j_3,:))\\
    & = \sum_{k = 1}^n \mathcal{G}_1( k_1, i_1,:)\mathcal{G}_2(:, k_2, i_2, :) \mathcal{G}_3 (:, k_3, i_3) (\mathcal{G}_3 (:, k_3, j_3)^T\mathcal{G}_2(:, k_2, j_2, :)^T\mathcal{G}_1(k_1, j_1,:)^T  )\\
    & = \sum_{k = 1}^n \mathcal{G}_1( k_1, i_1,:)\mathcal{G}_2(:, k_2, i_2, :) (\mathcal{G}_3 (:, k_3, i_3) \mathcal{G}_3 (:, k_3, j_3)^T)\mathcal{G}_2(:, k_2, j_2, :)^T\mathcal{G}_1(k_1, j_1,:)^T 
\label{eqn:semi-ot}
\end{align}
As a first step, we need to initialize $\mathcal{G}_3$ such that
\begin{align}
    A = &\sum_{k_3}\mathcal{G}_3 (:, k_3, i_3) \mathcal{G}_3 (:, k_3, j_3)^T = 
    \begin{cases}
        I , & \mbox{ if } i_3 = j_3\\
        0, &\mbox{ if } i_3 \neq j_3
    \end{cases}
\end{align}

Note that in general, $\mathcal{G}_3$ is in shape $R \times n \times m$, where $R < n, n \in [50, 300]$ and $m \le 10$. If $i_3 = j_3$,
\begin{equation}
    A(p, q) = 
    \begin{cases}
        \sum_{k_3} \mathcal{G}_3(p, k_3, i_3)^2 = 1 & \mbox{, if } p = q,\\
        \sum_{k_3} \mathcal{G}_3(p, k_3, i_3) \mathcal{G}_3(q, k_3, i_3) = 0& \mbox{, if } p\neq q
    \end{cases}
\label{eqn:G3}
\end{equation}

To solve for ref{eqn:G3}, we need $\mathcal{G}_3(:, :, i)$ to be row-wise semi-orthogonal for each $i$. i.e. $\|\mathcal{G}_3(p, :, i)\| = 1$ $\forall p$, and $\mathcal{G}_3(p, :, i) \perp \mathcal{G}_3(q, :, i)$. \textbf{Therefore we need to initialize each $\mathcal{G}_3(:, :, i)$ from QR decomposition of a random matrix $\in \mathbb{R}^{n \times n}$ and adopt the first $R$ rows of $Q$.}

If $i_3 \neq j_3$,
\begin{equation}
    A(p, q) = 
    \begin{cases}
        \sum_{k_3} \mathcal{G}_3(p, k_3, i_3)\mathcal{G}_3(p, k_3, j_3) = 0 & \mbox{, if } p = q,\\
        \sum_{k_3} \mathcal{G}_3(p, k_3, i_3) \mathcal{G}_3(q, k_3, j_3) = 0& \mbox{, if } p\neq q
    \end{cases}
\end{equation}

Similarly, we need $\mathcal{G}_3(p, :, :)$ to be column-wise semi-orthogonal and can be obtained from Gram-Schmidt. For the second condition in (5), we need all the vectors of length $n$ to be orthogonal to each other(*). One could verify that under (*), (4) can also be satisfied. Therefore the algorithm is

\textbf{Use QR decomposition on a $n \times n$ matrix, reshape the $Q$ matrix to $R \times n \times m$ as $\mathcal{G}_3$ such that each vector of length $n$ is orthogonal to each other. }

\begin{figure*}[ht]
    \centering
    \includegraphics[width=\textwidth]{Figures/ortho-proof.png}
    \caption{Visualization of equation (8) and (9)}
\end{figure*}

Continue with equation \ref{eqn:semi-ot}), we have 
\begin{align*}
    &W^T W(i,j) =\\
    & \sum_{k = 1}^n \mathcal{G}_1( k_1, i_1,:)\mathcal{G}_2(:, k_2, i_2, :) (\mathcal{G}_3 (:, k_3, i_3) \mathcal{G}_3 (:, k_3, j_3)^T)\mathcal{G}_2(:, k_2, j_2, :)^T\mathcal{G}_1(k_1, j_1,:)^T \\
    & = \begin{cases}
    \sum_{k_1}\sum_{k_2}\mathcal{G}_1( k_1, i_1,:)\mathcal{G}_2(:, k_2, i_2, :) \mathcal{G}_2(:, k_2, j_2, :)^T\mathcal{G}_1(k_1, j_1,:)^T & \mbox{, if } i_3 = j_3\\
    0 &\mbox{, if } i_3 \neq j_3
    \end{cases}\\
    & = \begin{cases}
    \sum_{k_1}\mathcal{G}_1( k_1, i_1,:)(\sum_{k_2} \mathcal{G}_2(:, k_2, i_2, :) \mathcal{G}_2(:, k_2, j_2, :)^T)\mathcal{G}_1(k_1, j_1,:)^T &\mbox{, if } i_3 = j_3\\
    0 &\mbox{, if } i_3 \neq j_3
    \end{cases}
\end{align*}

We solve for the next condition
\begin{align*}
    &B = \sum_{k_2}\mathcal{G}_2(:, k_2, i_2, :) \mathcal{G}_2(:, k_2, j_2, :)^T = \begin{cases}
    \alpha I &\mbox{, if } i_2 = j_2\\
    0 &\mbox{, if } i_2 \neq j_2
    \end{cases}
\end{align*}
If $i_2 = j_2$
\begin{align*}
    B(p, q) &= 
    \begin{cases}
        \sum_{k_2} \sum_s \mathcal{G}_2(p, k_2, i_2, s) \mathcal{G}_2^T(s, k_2, i_2, p) & \mbox{, if } p = q,\\
        \sum_{k_2} \sum_s \mathcal{G}_2(p, k_2, i_2, s) \mathcal{G}_2^T(s, k_2, i_2, q) & \mbox{, if } p\neq q
    \end{cases}\\
    & = \begin{cases}
        \sum_{k_2} \sum_s \mathcal{G}_2(p, k_2, i_1, s)^2 = 1 & \mbox{, if } p = q,\\
        \sum_{k_2} \sum_s \mathcal{G}_2(p, k_2, i_2, s) \mathcal{G}_2(q, k_2, i_2, s) = 0 & \mbox{, if } p\neq q
    \end{cases}\\
\end{align*}
Similarly, this condition implies that $\mathcal{G}_2(:, k_2, i_2, :)$ is a row-wise orthogonal matrix for all $k_2, i_2$. 

If $i_2 \neq j_2$,
\begin{align*}
B(p, q) &= 
    \begin{cases}
        \sum_{k_2} \sum_s \mathcal{G}_2(p, k_2, i_1, s) \mathcal{G}_2^T(s, k_2, j_2, q) & \mbox{, if } p = q,\\
        \sum_{k_2} \sum_s \mathcal{G}_2(p, k_2, i_2, s) \mathcal{G}_2^T(s, k_2, j_2, q) & \mbox{, if } p\neq q
    \end{cases}\\
    & = \begin{cases}
        \sum_{k_2} \sum_s \mathcal{G}_2(p, k_2, i_1, s) \mathcal{G}_2(p, k_2, j_2, s) = 0 & \mbox{, if } p = q,\\
        \sum_{k_2} \sum_s \mathcal{G}_2(p, k_2, i_2, s) \mathcal{G}_2(q, k_2, j_2, s) = 0 & \mbox{, if } p\neq q
    \end{cases}
\end{align*}
We use QR decomposition to obtain $RJ_2$ (corresponding to dimension $p/q$ and $i/j$) orthogonal vectors of length $RI_2$ (corresponding to dimension $k_2$ and $s$), reshape and place each vector at $\mathcal{G}_2(p, :, i, :)$. We could verify that all the conditions above are satisfied.

\begin{align*}
    &W^T W(i,j) =\\
    & \sum_{k = 1}^n \mathcal{G}_1( k_1, i_1,:)\mathcal{G}_2(:, k_2, i_2, :) (\mathcal{G}_3 (:, k_3, i_3) \mathcal{G}_3 (:, k_3, j_3)^T)\mathcal{G}_2(:, k_2, j_2, :)^T\mathcal{G}_1(k_1, j_1,:)^T \\
    & = \begin{cases}
    \sum_{k_1}\mathcal{G}_1( k_1, i_1,:)\mathcal{G}_1(k_1, j_1,:)^T & \mbox{, if } i_3 = j_3, i_2 = j_2\\
    0 &\mbox{, if } i_3 \neq j_3 \text{ or } i_2 \neq j_2
    \end{cases}
\end{align*}

\begin{align}
    C = &\sum_{k_1}\mathcal{G}_1 (k_1, i_1, :) \mathcal{G}_1 (k_1, j_1, :)^T = 
    \begin{cases}
        1 , & \mbox{ if } i_1 = j_1\\
        0, &\mbox{ if } i_1 \neq j_1
    \end{cases}
\end{align}
Similar to the analysis for $\mathcal{G}_2$, we generate $J_1$ orthogonal vectors of length $I_1 R$, reshape and place them at $\mathcal{G}_1(:, j, :)$ for each $j$.

This completes our analysis.

\section{Empirical Results}
\begin{figure}[h]
    \centering
    \includegraphics[scale=.8]{Figures/ortho-ele-dist.png}
    \caption{Distribution of the full tensor entries generated by our algorithm}
\end{figure}
We tested the model accuracy with initializing all tensor cores as described in this document, and uncompressed embedding tables from uniform distribution. The weights of the MLPs are initialized from Gaussian. Overall the accuracy is not significantly improved compare to other initialization methods(Figure ref{fig:table}). In Figure ref{fig:accuracy} we plot the training accuracy curve of the uncompressed DLRM and TT-DLRMs. It seems like the final accuracy is largely dependent on the initial accuracy, and the learning curve of all models are roughly the same. 

Note: the sampled Gaussian method in the two figures refers to initializing values randomly for Gaussian(0,1) and regenerating values within range (-2,2). The distribution of the full tensor obtained by this method is plotted in Figure ref{fig:sample}.
\begin{figure*}[h]
    \centering
    \includegraphics[width=\columnwidth]{Figures/ortho-accuracy.png}
    \caption{Training accuracy over iterations}
    \label{fig:table}
\end{figure*}
\begin{figure}
    \centering
    \includegraphics[width=\columnwidth]{Figures/ortho-accu-table.png}
    \caption{Training loss and accuracy of DLRM initialized by different distributions}
    \label{fig:accuracy_table}
\end{figure}

}

\endgroup

\end{document}